\documentclass[preprint,12pt,number]{elsarticle}

\usepackage[T1]{fontenc}
\usepackage[utf8]{inputenc}
\usepackage{lmodern}
\usepackage{amsmath,amssymb,mathtools}
\usepackage{amsthm}
\usepackage{booktabs,longtable,array}
\usepackage{enumitem}
\usepackage{graphicx}
\usepackage{xcolor}
\usepackage{float}
\usepackage[hidelinks]{hyperref}
\usepackage{tikz}
\usetikzlibrary{arrows.meta,positioning,shapes.misc,fit,backgrounds,calc}
\usepackage{caption}
\newcolumntype{L}[1]{>{\raggedright\arraybackslash}p{#1}}

\definecolor{cprec}{HTML}{1f6f8b}
\definecolor{cset}{HTML}{8e44ad}
\definecolor{cenv}{HTML}{c0392b}
\definecolor{crule}{HTML}{27ae60}
\definecolor{cthr}{HTML}{b9770e}

\setlist{nosep}
\providecommand{\doi}[1]{\textsc{doi:}\,#1}

\newcommand{\A}{\mathcal A}
\newcommand{\Pset}{\mathcal P}
\newcommand{\Hset}{\mathcal H}
\newcommand{\Dset}{\mathcal D}
\newcommand{\E}{\mathbb E}
\newcommand{\pboxLowerE}{\underline{\mathbb E}^{\square}}
\newcommand{\pboxUpperE}{\overline{\mathbb E}^{\square}}
\newcommand{\NB}{\operatorname{NB}}
\newcommand{\EVPI}{\operatorname{EVPI}}
\newcommand{\EVPPI}{\operatorname{EVPPI}}
\newcommand{\EVSI}{\operatorname{EVSI}}
\newcommand{\ENBS}{\operatorname{ENBS}}
\newcommand{\Cstudy}{C_{\mathrm{study}}}
\newcommand{\VOI}{\operatorname{VOI}}

\newcommand{\argmax}{\operatorname*{arg\,max}}
\newtheorem{theorem}{Theorem}
\newtheorem{proposition}{Proposition}
\theoremstyle{remark}
\newtheorem{remark}{Remark}
\theoremstyle{plain}

\newcounter{algctr}
\newenvironment{workflowalgorithm}[1]{%
 \refstepcounter{algctr}%
 \par\medskip\noindent\textbf{Algorithm \thealgctr. #1}\par\smallskip%
 \begin{enumerate}[leftmargin=2.1em,label=\arabic*.]
}{%
 \end{enumerate}\medskip
}

\journal{International Journal of Approximate Reasoning}

\begin{document}

\begin{frontmatter}

\title{Value of Information under Imprecise Probabilities:\\
Decision Rule-Specific Values and Fixed-Measure Envelopes on a Credal Set}

\author[mtl,brown]{Rowan Iskandar}
\ead{rowan.iskandar@gmail.com}

\affiliation[mtl]{organization={Medtronic Trading S\`arl},
  city={Tolochenaz}, state={VD}, country={Switzerland}}
\affiliation[brown]{organization={Center for Evidence Synthesis in Health, Brown University School of Public Health},
  city={Providence}, state={RI}, country={USA}}

\begin{abstract}
Value-of-information (VOI) analysis is usually conducted under a single
probability measure.
However, in practice, the available evidence often pins the measure down
only to a set. 
Consequently, under a set of probability measures, VOI requires different formulations.
First, we explicate a \emph{rule-specific VOI} that fixes a decision rule for acting under
imprecision (such as $\Gamma$-maximin) and measures what the information is worth
to a decision maker who uses that rule.
Second, we derive a \emph{fixed-measure envelope} that evaluates the classical VOI functional over all admissible precise measures. 
We formalize this distinction and explicate its consequences for the expected perfect, partial, and sample information. 
The expected value of perfect information is concave over the credal set. 
Hence, when the set is generated by finitely many measures, its lower envelope endpoint is
obtained exactly from the generators, while its upper endpoint may be interior
and is computed by a finite linear program. The $\Gamma$-maximin value, in
contrast, can exceed the entire envelope, so a rule-specific value is not
recovered from the envelope's endpoints. 
A continuity bound limits how much the VOI can change as the measure varies, and we identify when the partial- and sample-information endpoints can still be obtained from the generators.
Because the single-measure VOI must itself be estimated, the procedure we give combines standard estimators for it with a search over the credal set.
By using a worked decision problem, we show how the
two quantities separate conclusions that hold across every admissible measure
from conclusions that depend on one unidentified choice of measure.
\end{abstract}

\begin{keyword}
Imprecise probability \sep Credal sets \sep Value of information \sep
Lower previsions \sep $\Gamma$-maximin \sep Probability bounds analysis \sep
Decision making under uncertainty
\end{keyword}

\end{frontmatter}

\section{Introduction}
\label{sec:intro}

A decision-maker often must decide before the evidence is complete.
Such a decision-making situation, often formulated as a decision model, is fraught with uncertainty. 
The uncertainty may induce loss to the decision-maker due to choosing a suboptimal decision. 
Value-of-information (VOI) analysis then
asks how much the decision could improve if some or all of that uncertainty were
resolved before the choice is made
\citep{felli1998sensitivity,ades2004evsi,jackson2019voi}. 
This question has a
clear answer when a single probability measure is accepted as the basis for characterizing uncertainty.
However, there are situations when the evidence does not single out
one such measure. For example, the results of two evidence syntheses may both be defensible yet disagree.
Committing to one probability measure then
makes the VOI analysis results look more precise than the evidence warrants.
One way to relax this restriction is to consider 
every plausible measure in the analysis, instead of choosing
one. 
A credal set $\mathcal P$ is a set of probability measures, each
compatible with the evidence \citep{walley1991,augustin2014introduction}.
Such
sets naturally arise from interval or bound constraints, partially specified moments,
weakly identified dependence, robust classes of priors, or a finite set of
acceptable model structures
\citep{jackson2010structural,jackson2011framework,iskandar2026representations}.
This has a direct consequence for VOI. 
A single measure plays two roles: it ranks actions by their expected net benefit, and it determines the distribution over which the VOI is computed. 
In contrast, a credal set determines neither a single ranking of the actions nor a single VOI value.
We must then distinguish how much the information is worth for the decision once a rule for acting under several measures is fixed, from a sensitivity summary, which captures how the VOI calculation moves as
the measure varies.

The first quantity is the value of the information under a stated rule for acting under imprecision, such as \(\Gamma\)-maximin, which ranks each action by its worst expected net benefit over \(\mathcal P\), minimax regret, or a weighted lower-upper rule.
Before the information $Z$ is observed, the decision maker can
commit only to a single fixed action.
After $Z$ is observed, the decision-maker
can choose an act that depends on the information. 
This rule-specific VOI quantifies the gain in value, measured by the chosen rule, from being able to adapt one's act to $Z$. 
This quantity answers a decision question: what the information is worth to someone who acts by that rule. 
Set-valued criteria, such as E-admissibility or maximality, need further selection or reporting
convention before they yield a single VOI number.
The second quantity is the range of the standard fixed-measure VOI over the credal set.
The analyst fixes one VOI target and evaluates it under each measure in the credal set. 
Writing $T(P)$ for that classical VOI value under measure $P$, the fixed-measure VOI envelope is
\begin{equation*}
  [\inf_{P\in\mathcal P} T(P),\ \sup_{P\in\mathcal P} T(P)].
\end{equation*}
Here ``envelope'' refers to the lowest and highest classical VOI
values over the measures compatible with the evidence. 
Comparing this interval with a decision threshold asks whether a research conclusion would change if a different admissible measure had been used as the prior.
These two quantities answer different questions.
The rule-specific value is not necessarily equal to an endpoint of the fixed-measure envelope. In fact, it can even lie outside the whole envelope. 
Reporting both prevents a VOI computed under a single, unsupported measure from being mistaken for a robustness analysis.

This distinction builds on established decision-theoretic ideas. 
In  statistical decision theory, Bayes values and risks have an affine/convex structure
as functions of the prior \citep{degroot1970optimal,berger1985statistical}.
In applied work in Health, VOI is a standard tool for sensitivity analysis and research prioritization in medical decision making \citep{felli1998sensitivity,jackson2019voi,claxton2006prioritise,fenwick2020ispor1,rothery2020ispor2,jackson2022healthpolicy}. 
The lower-expectation rule used below is the \(\Gamma\)-maximin form of maxmin expected utility with a non-unique prior \citep{gilboa1989maxmin}. 
Related work studies information beyond the single-prior expected-utility setting.
\citet{szaniawski1967perfect} studied information values under maximin,
minimax, and Hurwicz criteria; \citet{wakker1988nonexpected} showed that
nonexpected-utility preferences can make information aversive.
\citet{schlee1991perfect} gave a weak dominance condition under which fully
resolving uncertainty remains nonnegative; \citet{snow2010ambiguity} studied ambiguity
and the value of information; and dilation \citep{seidenfeld1993dilation}
shows that conditioning can widen a credal set. This paper uses those ideas to
study VOI over credal sets. The rule-specific decision value and the
fixed-measure envelope are different estimands, have different endpoint
structures, and support different threshold interpretations.

This paper is organized as follows.
Section~\ref{sec:framework}
defines the two quantities precisely. Section~\ref{sec:theory} works out their
mathematical structure, including how they are computed when the credal set is
described by finitely many measures: one envelope endpoint follows directly,
while the other requires solving an optimization problem.
Section~\ref{sec:implementation} gives a procedure for computing both quantities in practice, separating the parts that can be computed exactly from those that
must be estimated numerically. Section~\ref{sec:application} works through a
decision problem in detail, showing how the two quantities distinguish
conclusions that hold across every admissible measure from conclusions that
depend on one particular choice of measure. Figure~\ref{fig:concept} summarizes
the two quantities and how each is compared with a decision threshold.
\begin{figure}[H]
\centering
\hyphenpenalty=10000\exhyphenpenalty=10000\relax
\resizebox{\linewidth}{!}{%
\begin{tikzpicture}[
  font=\small,
  >={Stealth[length=2.2mm]},
  box/.style={rounded corners=2pt,draw=#1,line width=0.7pt,fill=#1!6,
              text width=31.5mm,align=center,inner sep=4pt,minimum height=14mm},
  rulebox/.style={box=crule,text width=35mm},
  lab/.style={font=\scriptsize\itshape,text=black!60},
  ar/.style={->,line width=0.8pt,draw=black!55},
]
\node[box=cprec] (prec) {\textbf{Classical VOI}\\[2pt]\scriptsize one precise measure $P_0$; one selected information target};
\node[box=cset,right=14mm of prec] (set) {\textbf{Credal set} $\mathcal P$\\[2pt]\scriptsize every measure compatible with the evidence};
\node[box=cenv,above right=6mm and 16mm of set] (env) {\textbf{Fixed-measure envelope}\\[2pt]\scriptsize $[\underline T,\overline T]$: range of classical VOI over $\mathcal P$};
\node[rulebox,below right=6mm and 16mm of set] (rule) {\textbf{\mbox{Rule-specific VOI}}\\[2pt]\scriptsize value under a criterion (e.g.\ lower expectation)};
\node[box=cthr,right=60mm of set] (thr) {\textbf{Threshold classification}\\[2pt]\scriptsize compare with decision threshold $\tau$};
\draw[ar] (prec) -- node[lab,above]{relax} (set);
\draw[ar] (set.east) to[out=32,in=180] (env.west);
\draw[ar] (set.east) to[out=-32,in=180] (rule.west);
\draw[ar] (env.east) to[out=0,in=120] (thr.north);
\draw[ar] (rule.east) to[out=0,in=-120] (thr.south);
\node[draw=black!25,rounded corners=2pt,fill=black!2,inner sep=6pt,
      text width=132mm,font=\scriptsize,align=left,below=12mm of rule.south,xshift=-30mm] (out) {
  \textbf{Reading the result against the threshold $\tau$}\\[2pt]
  \textcolor{crule!70!black}{$\blacksquare$~\textbf{Robust}} --- the whole envelope lies on one side of $\tau$
  ($\underline T>\tau$, informative across $\mathcal P$; or $\overline T<\tau$, not informative across $\mathcal P$).\\[1pt]
  \textcolor{cthr}{$\blacksquare$~\textbf{Measure-dependent}} --- the envelope straddles $\tau$
  ($\underline T\le\tau\le\overline T$), some admissible measures cross the threshold; others do not.
};
\end{tikzpicture}%
}
\caption{Conceptual overview. A single reference measure $P_0$ is relaxed to a
credal set $\mathcal P$. The fixed-measure envelope $[\underline T,\overline T]$
records the range of a classical VOI functional over $\mathcal P$, whereas a
rule-specific VOI evaluates information under a chosen criterion for imprecision.
Envelope-threshold classification addresses robustness across precise measures;
a rule-specific threshold comparison answers a separate criterion-dependent
question. Boundary cases ($\underline T=\tau$ or $\overline T=\tau$) are
threshold-sensitive.}
\label{fig:concept}
\end{figure}

\section{Value of information on a credal set}
\label{sec:framework}

This section sets up the value of information on a credal set in three parts. We first recall the classical VOI functionals under a single precise measure, fixing notation and isolating where the choice of measure enters. We then replace that measure by a credal set and define the rule-specific and envelope estimands, stating the structural properties that make them usable. Finally, we give the unified estimation procedure. Each result is proved where stated, with supporting results given in \ref{app:proofs}. 

\subsection{Classical VOI under a single measure}
\label{sec:standard-voi}
\label{sec:psa-start}

Consider a decision model with a finite set of strategies $\A$, with $|\A|\ge 2$ and a generic strategy $a\in\A$. Let $\theta\in\Theta$ denote the uncertain model inputs, where $\Theta$ is the parameter space. For each strategy, the model returns a net benefit $\NB_a(\theta)\in\mathbb R$, with larger values preferred. This notation covers any value function or negative loss that can be evaluated for each strategy. In decision models that include both health outcomes and resource use, one common decision-relevant metric is $\NB_a(\theta;\lambda)=\lambda Q_a(\theta)-C_a(\theta)$, where $Q_a(\theta)\in\mathbb R$ is the effect measure, $C_a(\theta)\in\mathbb R$ is resource use or cost, and $\lambda\in(0,\infty)$ is the value assigned to one unit of effect. Throughout, $\NB_a(\theta)$ denotes this net benefit.

In a probabilistic sensitivity analysis, the standard approach for uncertainty propagation, the analyst specifies one precise measure $P$ on $\Theta$, samples input vectors from $P$, and evaluates each strategy's net benefit. The VOI quantities below are written directly as expectations under $P$.

The current strategy is the one with the highest expected net benefit under the specified measure. Ties can be handled by any prespecified rule.
\begin{equation}
 a^\ast=\argmax_{a\in\A}\E_P\{\NB_a(\theta)\}.
\end{equation}
Here $\E_P\{\cdot\}$ denotes expectation under $P$, and all VOI expectations are assumed finite.

EVPI compares the current decision, taken before the uncertainty in $\theta$ is resolved, with a perfect-information world in which $\theta$ is learned first and the best strategy is selected at each realized value.
\begin{equation}
 \EVPI(P)=
 \E_P\left[\max_{a\in\A}\NB_a(\theta)\right]
 -
 \max_{a\in\A}\E_P\left[\NB_a(\theta)\right].
 \label{eq:psa-evpi}
\end{equation}
This is a loss-from-uncertainty calculation, the expected net benefit lost because the decision must be made before the state of the world is known.

EVPPI and EVSI use the same logic with less-than-perfect information. Let $\phi=\phi(\theta)\in\Phi$ be a parameter group or function of the uncertain inputs, where $\Phi$ is its range. If $\phi$ were learned perfectly before decision making, the analyst would choose the strategy with the highest expected net benefit conditional on $\phi$.
\begin{equation}
 \EVPPI(P;\phi)=
 \E_P\left[\max_{a\in\A}\E_P\{\NB_a(\theta)\mid \phi\}\right]
 -
 \max_{a\in\A}\E_P\left[\NB_a(\theta)\right].
 \label{eq:psa-evppi}
\end{equation}
If a proposed study design $d\in\Dset$ would generate future data $Y_d\in\mathcal Y_d$, where $\Dset$ is the candidate design set and $\mathcal Y_d$ is the sample space for the future data, then
\begin{equation}
 \EVSI(P;d)=
 \E_P\left[\max_{a\in\A}\E_P\{\NB_a(\theta)\mid Y_d\}\right]
 -
 \max_{a\in\A}\E_P\left[\NB_a(\theta)\right].
 \label{eq:psa-evsi}
\end{equation}
The expected net benefit of sampling is $\ENBS(P;d)=\EVSI(P;d)-\Cstudy(d)$, where $\Cstudy(d)\in[0,\infty)$ is the fixed cost of study design $d$ unless otherwise stated.

Equations~\eqref{eq:psa-evpi}--\eqref{eq:psa-evsi} are the reference definitions. The extension below keeps them but considers how the same quantities should be read when the evidence supports an admissible set rather than a single $P$.

\subsection{From a single measure to a credal set}
\label{sec:ip-voi}

The previous section assumed that the probability measure $P$ can be precisely specified. Value of information under imprecise probability (IP-VOI) becomes relevant when that assumption cannot be defended. The goal is not to replace the logic and machinery of VOI, but to honestly represent uncertainty using a set of probability measures supported by the evidence.
The VOI task is then to quantify the value of research across that set.

\subsubsection{Credal sets and p-boxes}
\label{sec:why-ip}

In IP-VOI the first step is to state what current evidence allows. Instead of
committing to a single probability measure, as a standard probabilistic sensitivity
analysis does, the analyst defines
a nonempty set $\Pset$ of admissible probability measures on $\Theta$. Each
$P\in\Pset$ is a complete measure that can be used in the standard decision model,
but the evidence does not justify selecting one of them as uniquely correct.
Examples include alternative evidence syntheses, structural or extrapolation
models, weakly identified dependence, interval-valued moments or quantiles, and
partially specified distributions.

Probability boxes, or p-boxes, are one useful way to encode such imprecision for
scalar inputs whose CDFs are only partially known \citep{ferson2002constructing,ferson2006sensanal,iskandar2021pba}. For a scalar input $X$ with possible CDF
$F_X$, a p-box is a pair of valid CDF bounds satisfying
\begin{equation}
 \underline F_X(x) \le F_X(x) \le \overline F_X(x),
 \qquad x\in\mathbb R .
 \label{eq:pbox-definition}
\end{equation}
Probability bounds analysis propagates such bounds, together with explicit
dependence assumptions, through the decision model. It is not a separate VOI
criterion here, but one route to specifying or approximating the credal set.

\subsubsection{Distributional bounds and expectation bounds}
\label{sec:pbox-to-expectation}

For a strategy $a$, let $X_a=\NB_a(\theta)$ and
$F_{a,P}(x)=P\{\NB_a(\theta)\le x\}$. Varying $P$ over $\Pset$ induces a set of
output distributions with pointwise bounds
\[
\underline F_a(x)=\inf_{P\in\Pset}F_{a,P}(x),\qquad
\overline F_a(x)=\sup_{P\in\Pset}F_{a,P}(x).
\]
Decision making, however, uses expectations. For any bounded output $X$,
\begin{equation}
 \underline{\E}_{\Pset}(X)=\inf_{P\in\Pset}\E_P(X),
 \qquad
 \overline{\E}_{\Pset}(X)=\sup_{P\in\Pset}\E_P(X).
 \label{eq:lower-upper-expectation}
\end{equation}
A full p-box band on support $[\ell_X,u_X]$ is
\[
\begin{aligned}
\mathcal B(\underline F_X,\overline F_X)=
\{F:\;&F \text{ is a CDF supported on }[\ell_X,u_X],\\
&\underline F_X(x)\le F(x)\le \overline F_X(x)\ \forall x\}.
\end{aligned}
\]
When the bounds are valid CDFs, the lower and upper expectations over this full
band are (Proposition~\ref{prop:pbox-expectation})
\begin{equation}
\begin{split}
 \pboxLowerE(X)&=\ell_X+\int_{\ell_X}^{u_X}\{1-\overline F_X(x)\}\,dx,\\
 \pboxUpperE(X)&=\ell_X+\int_{\ell_X}^{u_X}\{1-\underline F_X(x)\}\,dx .
\end{split}
\label{eq:pbox-expectation}
\end{equation}
Because the p-box band can contain distributions not induced by any
$P\in\Pset$,
\begin{equation}
 \pboxLowerE(X)\le \underline{\E}_{\Pset}(X)
 \le \overline{\E}_{\Pset}(X)\le \pboxUpperE(X),
 \label{eq:pbox-expectation-outer}
\end{equation}
with equality only when the admissible output distributions are the full band.
For EVPI, p-box propagation must preserve joint structure. The quantity
$M(\theta)=\max_a\NB_a(\theta)$ depends on the joint vector of strategy net
benefits, not on their marginal output p-boxes separately.

\subsubsection{Decision criteria and rule-specific VOI}
\label{sec:decision-rule-ip}

Once expected net benefits are interval-valued, identifying ``the strategy with the highest expected net benefit'' is no longer straightforward, because the credal set determines only an interval of expected net benefit for each strategy, not a single value, so there is no unique ranking of the strategies. A decision rule specifying how a strategy or policy is selected from among interval-valued expected net benefits is therefore part of the VOI estimand.

Let $Z$ denote information observed before the final strategy is chosen. For a current decision, $Z$ is degenerate and a policy is simply a constant choice $a\in\A$. For EVPPI or EVSI, a policy is a measurable function $\delta:\mathcal Z\to\A$, where $\mathcal Z$ is the range of $Z$. Let $\Delta_Z$ denote the class of policies under consideration. In the unrestricted finite case, $\Delta_Z=\{\delta:\mathcal Z\to\A\}$ denotes all measurable maps. Policies and actions are \emph{deterministic} throughout. The set $\A$ contains the named, implementable strategies, and randomized mixtures are admitted only if they are added to $\A$ as explicit strategies. This is the natural convention for health-economic decision models, whose strategies are discrete and often not randomizable, but it is a modeling choice. Enlarging $\A$ with randomized strategies can change the rule-specific lower-expectation values (it cannot change any single-measure or fixed-measure-envelope quantity, since those maximize a linear objective that is already optimized at a pure action), as we note for the overshoot example in Section~\ref{sec:concavity-thm}.
Let $\Pset_Z$ be the admissible set of joint probability measures for $(\theta,Z)$, extending $\Pset$ in the sense that the set of $\theta$-marginals of measures in $\Pset_Z$ is exactly $\Pset$. For the two information types this joint set is not arbitrary. For EVPPI, $Z=\phi(\theta)$ is a function of $\theta$, so each $P\in\Pset$ induces a single joint law by push-forward and $\Pset_Z=\{P\circ(\mathrm{id},\phi)^{-1}:P\in\Pset\}$. For EVSI with design $d$, the future data are generated through the sampling kernel $K_d(\mathrm dy\mid\theta)$ fixed by the design, so $\Pset_{Y_d}=\{P(\mathrm d\theta)\,K_d(\mathrm dy\mid\theta):P\in\Pset\}$. Only joint laws of this form are admissible, which ties EVSI to the specific data-generating design rather than to an arbitrary joint extension with the correct $\theta$-marginal. A decision rule $\rho$ assigns each policy a scalar criterion $C_\rho(\delta;\Pset_Z)$. The value after observing $Z$ under rule $\rho$ is
\begin{equation}
 V_Z^{\rho}(\Pset_Z)=
 \sup_{\delta\in\Delta_Z} C_\rho(\delta;\Pset_Z),
 \label{eq:general-rule-value}
\end{equation}
with the current value $V_0^{\rho}(\Pset)$ obtained by restricting $\Delta_Z$ to constant strategies. The rule-specific VOI is
\begin{equation}
 \VOI_Z^{\rho}=V_Z^{\rho}(\Pset_Z)-V_0^{\rho}(\Pset),
 \label{eq:general-ipvoi}
\end{equation}
provided both values are finite. This expression covers EVPI by setting $Z=\theta$, EVPPI by setting $Z=\phi(\theta)$, and EVSI by setting $Z=Y_d$. For EVSI, the corresponding net value subtracts $\Cstudy(d)$.

Different rules answer different questions and should be fixed as part of the estimand, before results are seen. Lower expected net benefit, also called the lower-expectation or $\Gamma$-maximin criterion, is the maxmin expected-utility rule of \citet{gilboa1989maxmin} specialized to a credal set of named strategies. It uses $C_{\mathrm{LE}}(\delta;\Pset_Z)=\inf_{P\in\Pset_Z}\E_P\{\NB_{\delta(Z)}(\theta)\}$ and selects the policy with the largest guaranteed expected value. This criterion suits a decision maker who wants protection against unsupported optimism (for example, a payer or regulator requiring a conclusion that holds over every measure still compatible with the evidence), but the criterion can be conservative when $\Pset$ is broad or contains extreme measures, and should be read as a guaranteed-value analysis rather than the unique prescription of imprecise probability.

Other rules trade off differently. Minimax regret targets the largest opportunity loss relative to the best strategy under each measure rather than the worst absolute value. Maximality and interval dominance screen out dominated strategies without forcing a unique choice. E-admissibility describes the strategies optimal under at least one measure. Weighted lower--upper criteria combine the two bounds through an explicit ambiguity-attitude weight that the decision context must justify. Scalar rules such as $\Gamma$-maximin, minimax regret, and the weighted lower--upper criteria assign each policy a number $C_\rho(\delta;\Pset_Z)$ and so define a scalar rule-specific VOI directly through Equations~\eqref{eq:general-rule-value}--\eqref{eq:general-ipvoi}. Set-valued rules such as E-admissibility and maximality instead return a set of admissible policies and require an additional selection or reporting convention before they yield a scalar value, which we do not develop here. The fixed-measure envelope is not itself a rule for choosing under imprecision but a diagnostic for whether conventional VOI conclusions depend on the selected measure.

IP-VOI therefore does not require the conservatism of $\Gamma$-maximin, only transparency about the rule used to turn the imprecision into a single value. The algorithms below use the lower-expectation rule as the worked implementation because this rule is transparent, interpretable, conservative, and estimable directly from simulation output. Other rules are substituted by replacing $C_\rho$ in Equation~\eqref{eq:general-rule-value}. Under this rule,
\begin{equation}
 V_0^{\mathrm{LE}}(\Pset)=
 \max_{a\in\A}\inf_{P\in\Pset}\E_P\{\NB_a(\theta)\}.
 \label{eq:lower-current}
\end{equation}
If perfect information about $\theta$ is available and policies are unrestricted, the policy can choose the pointwise best strategy at each realized value of $\theta$, giving
\begin{equation}
 V_{\Theta}^{\mathrm{LE}}(\Pset)=
 \inf_{P\in\Pset}\E_P\left\{\max_{a\in\A}\NB_a(\theta)\right\}.
 \label{eq:lower-perfect}
\end{equation}
This equality holds because the pointwise optimal policy is the same function of $\theta$ for every $P\in\Pset$. At each realized $\theta$, this policy chooses a strategy with the largest net benefit and therefore attains the maximum inside the expectation for all admissible measures simultaneously. The same simplification does not generally extend to EVPPI or EVSI, because the best policy after observing partial information or future data may depend on which admissible probability measure is used.
Therefore, lower-expectation EVPI is
\begin{equation}
 \EVPI_{\Pset}^{\mathrm{LE}}=
 V_{\Theta}^{\mathrm{LE}}(\Pset)-V_0^{\mathrm{LE}}(\Pset).
 \label{eq:lower-evpi}
\end{equation}
For EVPPI or EVSI, Equations~\eqref{eq:general-rule-value}--\eqref{eq:general-ipvoi} are used with $Z=\phi(\theta)$ or $Z=Y_d$ and the same lower-expectation criterion. Lower-expectation EVPI is always nonnegative (Proposition~\ref{prop:robust-evpi}).

\subsubsection{The fixed-measure envelope}
\label{sec:misleading-standard-voi}

Rule-specific IP-VOI answers how much information is worth under a stated rule for decisions with imprecision. A complementary question asks how the usual VOI calculation would change if the analyst selected a different admissible precise probability measure. Let $T(P)$ denote a conventional VOI target under a precise probability measure $P$, such as $\EVPI(P)$, $\EVPPI(P;\phi)$, $\EVSI(P;d)$, or $\ENBS(P;d)$. The fixed-measure VOI envelope is
\begin{equation}
 T_{\mathrm{env}}(\Pset)=\{T(P):P\in\Pset\},
 \qquad
 \underline T=\inf_{P\in\Pset}T(P),\quad
 \overline T=\sup_{P\in\Pset}T(P).
 \label{eq:fixed-measure-envelope}
\end{equation}
The envelope is not a decision rule under imprecision but a diagnostic for how strongly the conventional VOI result depends on the precise probability measure selected for the uncertainty analysis. The set of achievable values is the sharp range left by the evidence, and forms a closed interval whenever $\Pset$ is compact and connected and $T$ is continuous (Proposition~\ref{prop:sharp}). If a reference measure $P_0\in\Pset$ is used, then $T(P_0)$ should lie inside $[\underline T,\overline T]$ up to numerical error.

The envelope identifies settings in which a single-measure VOI result can be misleading for research prioritization. Let $\tau\in\mathbb R$ denote the decision threshold for a VOI quantity, such as zero for net benefit of sampling or the minimum value required to justify a research program. If $\underline T>\tau$, the research conclusion is positive across the admissible set. If $\overline T<\tau$, the conclusion is negative across the admissible set. If $\underline T\le \tau\le \overline T$, the conclusion is assumption-dependent, since some admissible measures support the research and others do not. A single reference value is potentially misleading when the value lies on one side of $\tau$ while the envelope crosses $\tau$, or near one endpoint of a wide envelope. Two useful diagnostics are
\begin{equation}
 \Delta_{\mathrm{miss}}=\max\{0,\overline T-T(P_0)\},
 \qquad
 \Delta_{\mathrm{nonguaranteed}}=\max\{0,T(P_0)-\underline T\}.
 \label{eq:miss-guarantee}
\end{equation}
The first quantity measures how much value could be missed by reporting only the reference measure. The second measures how much of the reference value is not supported across all admissible measures.

\section{Structural theory of the value-of-information functionals}
\label{sec:theory}

This section establishes the structural properties of the three
value-of-information functionals, namely the expected value of perfect information
($\EVPI$), of partial perfect information ($\EVPPI$), and of sample information
($\EVSI$), viewed as functions of the probability measure ranging over a credal set.
The central results concern $\EVPI$ (Sections~\ref{sec:concavity-thm}
and~\ref{sec:sharp-tv-sub}). The partial- and sample-information functionals are
treated in Section~\ref{sec:vertex-fail}, and the sign of the rule-specific value
of all three in Section~\ref{sec:sign-voi}. Throughout, $\A$ is
finite, the net benefits are bounded ($|\NB_a(\theta)|\le B$ for all $a\in\A$
and $\theta\in\Theta$), and $\Pset$ is a nonempty compact convex subset of a
locally convex space of signed measures on $\Theta$, in a topology under which the finitely many
expectation maps $P\mapsto\E_P[\NB_a]$ ($a\in\A$) and $P\mapsto\E_P[M]$ are
continuous. When $\Pset=\operatorname{conv}\{G_1,
\ldots,G_K\}$ is finitely generated by $K$ measures these conditions hold automatically.
Recall $M(\theta)=\max_{a\in\A}\NB_a(\theta)$ from
Section~\ref{sec:pbox-to-expectation}, and abbreviate the two terms of the
classical EVPI as
\[
 f(P)=\E_P[M],\qquad
 g(P)=\max_{a\in\A}\E_P[\NB_a],
\]
so that the classical perfect-information and current-value functionals are
$V_\Theta(P)=f(P)$ and $V_0(P)=g(P)$, and $\EVPI(P)=f(P)-g(P)$ by
Equation~\eqref{eq:psa-evpi}. The fixed-measure envelope of a
functional $T$ over $\Pset$, defined in Equation~\eqref{eq:fixed-measure-envelope},
is written $[\,\underline T,\overline T\,]=[\inf_{P\in\Pset}T(P),\
\sup_{P\in\Pset}T(P)]$. In the notation of the present section, the
lower-expectation ($\Gamma$-maximin) quantities of
Equations~\eqref{eq:lower-current}--\eqref{eq:lower-evpi} read
$V_\Theta^{\mathrm{LE}}=\inf_{P\in\Pset}f(P)$,
$V_0^{\mathrm{LE}}=\max_{a\in\A}\inf_{P\in\Pset}\E_P[\NB_a]$, and
$\EVPI_{\Pset}^{\mathrm{LE}}=V_\Theta^{\mathrm{LE}}-V_0^{\mathrm{LE}}$, with the
argument $\Pset$ suppressed where clear from context.

\subsection{Concavity, exact lower endpoint, and the envelope--rule gap}
\label{sec:concavity-thm}

The following theorem records the consequences, for VOI envelopes over credal
sets, of the classical convexity structure of the optimal expected net benefit. The
concavity statement is the payoff-form analogue of the standard Bayes-risk
convexity/concavity result. The point here is to spell out what it implies for
fixed-measure VOI envelopes and for lower-expectation IP-VOI. In particular, the
\emph{lower} EVPI envelope endpoint is computed exactly by evaluating the
generators of a finitely generated credal set, whereas the $\Gamma$-maximin EVPI,
while never falling below the lower endpoint of the envelope, can lie strictly
\emph{above} the entire envelope. Thus the rule-specific value is not a selection
from, or a summary statistic of, the fixed-measure envelope.

\begin{theorem}[Structural consequences for EVPI over credal sets]
\label{thm:main}
Under the standing assumptions:
\begin{enumerate}
\item[(a)] \textnormal{(Concavity.)} $P\mapsto\EVPI(P)$ is concave on $\Pset$.
\item[(b)] \textnormal{(Exact lower endpoint.)} The lower envelope endpoint is
attained at an extreme point of $\Pset$. Consequently, if
$\Pset=\operatorname{conv}\{G_1,\ldots,G_K\}$, then
\[
 \underline{\EVPI}=\inf_{P\in\Pset}\EVPI(P)=\min_{1\le k\le K}\EVPI(G_k),
\]
so the lower endpoint is computed exactly by evaluating the $K$ generators,
without solving an optimization over $\Pset$.
\item[(c)] \textnormal{(Envelope--rule gap.)} The $\Gamma$-maximin EVPI always
satisfies $\EVPI_{\Pset}^{\mathrm{LE}}\ge\underline{\EVPI}$, but the inequality
$\EVPI_{\Pset}^{\mathrm{LE}}\le\overline{\EVPI}$ can fail, and there exist models in
which $\EVPI_{\Pset}^{\mathrm{LE}}>\overline{\EVPI}$. Hence the rule-specific
value need not lie within the fixed-measure envelope.
\end{enumerate}
\end{theorem}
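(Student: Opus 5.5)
The plan is to prove (a) from the affine/convex split of $\EVPI(P)=f(P)-g(P)$. The map $f(P)=\E_P[M]$ is affine in $P$, because expectation is linear in the measure. The map $g(P)=\max_{a\in\A}\E_P[\NB_a]$ is a pointwise maximum of finitely many affine maps, so it is convex; this is the payoff-form analogue of Bayes-risk concavity. Hence $f-g$ is concave. Boundedness $|\NB_a|\le B$ ensures every term is finite, and convexity of $\Pset$ makes mixtures $\lambda P+(1-\lambda)Q$ admissible. The key identity to write out is $\E_{\lambda P+(1-\lambda)Q}[\NB_a]=\lambda\E_P[\NB_a]+(1-\lambda)\E_Q[\NB_a]$, followed by bounding the maximum of a sum by the sum of the maxima.

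For (b), I would first use continuity of the finitely many expectation maps (a standing assumption) to see that $\EVPI$ is continuous, hence lower semicontinuous and concave on the compact convex set $\Pset$. Bauer's minimum principle then gives that a lower semicontinuous concave function on a nonempty compact convex subset of a locally convex space attains its minimum at an extreme point. This is the main technical step; an alternative route is Krein--Milman combined with lower semicontinuity. In the finitely generated case the extreme points lie among $G_1,\ldots,G_K$, so the minimum equals $\min_k\EVPI(G_k)$. This case also admits an elementary proof: write $P=\sum_k\lambda_kG_k$ and apply concavity (Jensen for finite convex combinations) to get $\EVPI(P)\ge\sum_k\lambda_k\EVPI(G_k)\ge\min_k\EVPI(G_k)$. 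Attainment is automatic since the $G_k$ belong to $\Pset$.

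For (c), the inequality should follow by comparing optimizers. Let $a^\dagger$ attain $V_0^{\mathrm{LE}}=\max_a\inf_P\E_P[\NB_a]$, and by compactness and continuity choose $P^\ast$ attaining $\inf_P f(P)=V_\Theta^{\mathrm{LE}}$. Then
\[
\EVPI^{\mathrm{LE}}_{\Pset}=f(P^\ast)-\max_a\inf_P\E_P[\NB_a]\ \ge\ f(P^\ast)-\max_a\E_{P^\ast}[\NB_a]=\EVPI(P^\ast)\ge\underline{\EVPI},
\]
using $\inf_P\E_P[\NB_a]\le\E_{P^\ast}[\NB_a]$ for each $a$.

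For the failure of the upper bound I would build an explicit two-state counterexample. Take $\Theta=\{1,2\}$, credal set $\{P_p:\ p\in[\underline p,\overline p]\}$ with $p=P(\theta=1)$, and two actions that are each good in opposite states. A natural choice is $\NB_1=(1,0)$ and $\NB_2=(0,1)$ with $p\in[0,1]$. Then $M\equiv1$, so $f\equiv1$ and $V_\Theta^{\mathrm{LE}}=1$. Each action has lower expectation $0$, so $V_0^{\mathrm{LE}}=0$ and $\EVPI^{\mathrm{LE}}=1$. Meanwhile $\EVPI(P_p)=1-\max(p,1-p)\le 1/2$, so $\overline{\EVPI}=1/2<1$. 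The last step is to check the numbers and note that the gap comes from $\Gamma$-maximin penalizing each fixed action at a different worst-case measure.
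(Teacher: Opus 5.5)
Your proposal is correct and follows essentially the same route as the paper: concavity from $\EVPI=f-g$ with $f$ affine and $g$ a maximum of affine maps (the paper writes this equivalently as $\min_a\E_P[M-\NB_a]$), Jensen on the generators together with Bauer's minimum principle for (b), and a mirror-image two-state example for (c), yours being an affine rescaling of the paper's $\NB_0=(1,-1)$, $\NB_1=(-1,1)$ instance. The only variation is in the lower bound of (c), where you evaluate at a minimizer $P^\ast$ of $f$ (using compactness and continuity for attainment) instead of the paper's infimum chain $\inf_P f-\inf_P g\ge\inf_P(f-g)$; both arguments are valid.
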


\begin{proof}
(a) Write $D_a(\theta)=M(\theta)-\NB_a(\theta)=\max_{b\in\A}\NB_b(\theta)-
\NB_a(\theta)\ge0$. Since $\E_P[M]-\E_P[\NB_a]=\E_P[D_a]$ and the current value
is $\max_a\E_P[\NB_a]$,
\begin{equation}
 \EVPI(P)=\E_P[M]-\max_{a\in\A}\E_P[\NB_a]
 =\min_{a\in\A}\bigl(\E_P[M]-\E_P[\NB_a]\bigr)
 =\min_{a\in\A}\E_P[D_a].
 \label{eq:min-affine}
\end{equation}
Each $P\mapsto\E_P[D_a]$ is affine, so $\EVPI$ is a pointwise minimum of affine
functions and is therefore concave. (Representation \eqref{eq:min-affine} also
shows $\EVPI(P)\ge0$, since $D_a\ge0$ implies $\E_P[D_a]\ge0$.)

(b) Let $P=\sum_k w_k G_k$ be any point of $\Pset=\operatorname{conv}\{G_1,
\ldots,G_K\}$, with $w_k\ge0$, $\sum_k w_k=1$. By concavity (Jensen's inequality
for a concave function),
\[
 \EVPI(P)=\EVPI\Bigl(\sum_k w_k G_k\Bigr)\ \ge\ \sum_k w_k\,\EVPI(G_k)
 \ \ge\ \min_{1\le k\le K}\EVPI(G_k),
\]
and the final bound is attained at the generator achieving the minimum. Hence
$\inf_{P\in\Pset}\EVPI(P)=\min_k\EVPI(G_k)$, with no optimization over $\Pset$
required. (For a general compact convex $\Pset$, concavity and continuity give
attainment of the infimum at an extreme point by Bauer's minimum principle, and
$\inf_{P}\EVPI(P)=\inf_{P\in\operatorname{ext}\Pset}\EVPI(P)$.)

(c) For the lower bound, fix $a\in\A$. Since $\E_P[\NB_a]\le g(P)$ for every $P$,
we have $\inf_{P}\E_P[\NB_a]\le\inf_{P}g(P)$. Taking the maximum over $a$ gives
$V_0^{\mathrm{LE}}=\max_a\inf_P\E_P[\NB_a]\le\inf_P g(P)$. Therefore
\[
 \EVPI_{\Pset}^{\mathrm{LE}}
 =\inf_P f(P)-V_0^{\mathrm{LE}}
 \ \ge\ \inf_P f(P)-\inf_P g(P).
\]
Because $f-g\le f-\inf_P g$ pointwise, $\inf_P(f-g)\le\inf_P f-\inf_P g$, so the
right-hand side is at least $\inf_P(f-g)=\underline{\EVPI}$. Combining,
$\EVPI_{\Pset}^{\mathrm{LE}}\ge\underline{\EVPI}$.

For the failure of the upper inequality we exhibit an explicit model. Let
$\A=\{0,1\}$, let $\Theta=\{1,2\}$, and set
\[
 \NB_0(1)=1,\ \NB_0(2)=-1,\qquad \NB_1(1)=-1,\ \NB_1(2)=1,
\]
with $B=1$. Take $\Pset$ to be the full simplex on $\Theta$, i.e.\
$\Pset=\operatorname{conv}\{e_1,e_2\}$, and write $x=P(\{1\})\in[0,1]$,
where $e_1,e_2$ are the point masses (vertices of the simplex) on states $1$
and $2$.
Then $M(\theta)=\max\{\NB_0,\NB_1\}=1$ for both states, so $f(P)=\E_P[M]=1$ for
every $P$, while $\E_P[\NB_0]=2x-1$ and $\E_P[\NB_1]=1-2x$, so
$g(P)=|2x-1|$ and $\EVPI(P)=1-|2x-1|$. The envelope is therefore
$[\,\underline{\EVPI},\overline{\EVPI}\,]=[0,1]$, with the maximum at $x=\tfrac12$
and the minimum at $x\in\{0,1\}$. The $\Gamma$-maximin quantities are
$V_\Theta^{\mathrm{LE}}=\inf_P f=1$ and
$V_0^{\mathrm{LE}}=\max\{\inf_x(2x-1),\inf_x(1-2x)\}=\max\{-1,-1\}=-1$, giving
\[
 \EVPI_{\Pset}^{\mathrm{LE}}=1-(-1)=2\ >\ 1=\overline{\EVPI}.
\]
Thus $\EVPI_{\Pset}^{\mathrm{LE}}$ exceeds the entire fixed-measure envelope.
\end{proof}

\begin{remark}[Why the overshoot occurs and how to read it]
\label{rem:overshoot}
The mechanism is a duality gap between actions and measures. Writing
$\psi(a,P)=\E_P[\NB_a]$, the $\Gamma$-maximin current value is the lower game
value $V_0^{\mathrm{LE}}=\max_a\inf_P\psi(a,P)$, while the classical current
value averaged worst-case is $\inf_P g(P)=\inf_P\max_a\psi(a,P)$. The minimax
inequality gives $V_0^{\mathrm{LE}}\le\inf_P g$, with equality only when some
action attains the lower envelope value $\inf_P\max_a\psi(a,P)$, and otherwise an
action--measure gap remains. The
envelope evaluates the perfect-information term $f$ and the current value $g$ at
the \emph{same} measure, whereas $\EVPI_{\Pset}^{\mathrm{LE}}$ pairs $\inf_P f$
with the lower game value $V_0^{\mathrm{LE}}$, and these are attained at
different measures. The lower-expectation EVPI is a coherent decision-analytic
value, the guaranteed gain from perfect information under a maximin attitude, but
is not a single-measure EVPI, and Theorem~\ref{thm:main}(c) shows this value can
exceed the largest single-measure EVPI. Operationally, this is a caution against
the common shortcut of reading a $\Gamma$-maximin information value as
the upper end of the sensitivity range, or computing it by maximizing classical
EVPI over the credal set. The two answers differ, and not merely by a sign or a
constant. The symmetric instance above is extremal in a concrete sense. The
perfect action label reverses between the two states, so the action--measure gap
is maximal and resolving uncertainty is most valuable at the center
($x=\tfrac12$) while the maximin current value is pinned to the worst state for
each action.

The strict overshoot also depends on the deterministic-action convention of
Section~\ref{sec:decision-rule-ip}. If randomized mixtures of the two actions were admitted, the even mixture would guarantee expected net benefit $0$ for every $P$,
raising the maximin current value from $-1$ to $0$ and the lower-expectation EVPI
to $1=\overline{\EVPI}$. More generally, allowing randomization makes the maximin
current value equal to $\inf_P g(P)$ (by the minimax theorem, the criterion being
bilinear in the mixing weights and in $P$), so
$\EVPI_{\Pset}^{\mathrm{LE}}=\inf_P f-\inf_P g\le\overline{\EVPI}$ and the strict overshoot disappears. The rule-specific value nevertheless remains a genuinely
distinct object even then because it pairs $\inf_P f$ with $\inf_P g$, attained at
different measures, rather than evaluating any single-measure EVPI. We therefore
state part~(c) for the deterministic, named-strategy action sets standard in
health-economic decision models. An analyst who can implement a randomized
strategy may add it to $\A$, with the understanding that doing so changes the
rule-specific value.
\end{remark}

\begin{remark}[The upper envelope endpoint: concavity and a linear program]
\label{rem:upper-lp}
By Theorem~\ref{thm:main}(a) the EVPI functional is concave, so its
\emph{maximum} over $\Pset$ is generally an interior point and is not delivered
by vertex enumeration. Only the lower endpoint enjoys the exact finite
computation in part~(b). For a finitely generated
$\Pset=\operatorname{conv}\{G_1,\ldots,G_K\}$ the upper endpoint is nonetheless
exactly computable by a finite linear program. Writing $P_w=\sum_k w_k G_k$ and
$D_{k,a}=\E_{G_k}[M]-\E_{G_k}[\NB_a]$, representation~\eqref{eq:min-affine} gives
$\EVPI(P_w)=\min_{a\in\A}\sum_k w_k D_{k,a}$, so
\[
\begin{aligned}
 \overline{\EVPI}
 &=\max_{w\in\Delta_K}\min_{a\in\A}\sum_{k=1}^K w_k D_{k,a}\\
 &=\max\Bigl\{\,t:\ t\le\textstyle\sum_k w_k D_{k,a}\ (a\in\A),\
 \sum_k w_k=1,\ w\ge0\,\Bigr\},
\end{aligned}
\]
a linear program in $(w,t)$ over the simplex $\Delta_K$. The generator maximum
$\max_k\EVPI(G_k)=\max_k\min_a D_{k,a}$ is only a \emph{lower} bound on this
optimum, and the gap can be substantial. In the applied example of
Section~\ref{sec:application}, the generator maximum is $\pounds 349$ but the
upper endpoint is $\pounds 514$. This asymmetry is the opposite of the linear
perfect-information value $f(P)=\E_P[M]$, whose endpoints are both attained at
extreme points because $f$ is affine. The current-value functional $g$ is convex,
so its upper endpoint is at a vertex while its lower endpoint may be interior.
These facts together explain which envelope endpoints can be read off the
generators and which require the linear program or an interior search.
\end{remark}

\begin{remark}[The lower-expectation EVPI components are also generator-exact]
On a finitely generated credal set the $\Gamma$-maximin quantities themselves
require no interior optimization, since $M$ is a fixed function and each
$\E_P[\NB_a]$ is affine in $P$, the infimum of an affine functional over
$\operatorname{conv}\{G_1,\ldots,G_K\}$ is attained at a generator, so
$V_\Theta^{\mathrm{LE}}=\min_k\E_{G_k}[M]$ and
$V_0^{\mathrm{LE}}=\max_a\min_k\E_{G_k}[\NB_a]$, whence
$\EVPI_{\Pset}^{\mathrm{LE}}=\min_k\E_{G_k}[M]-\max_a\min_k\E_{G_k}[\NB_a]$ is
computed exactly from the $K$ generators. This is consistent with the overshoot
of Theorem~\ref{thm:main}(c). The two minima over $k$ are generally attained at
different generators, which is precisely why $\EVPI_{\Pset}^{\mathrm{LE}}$ need
not coincide with any single $\EVPI(G_k)$ or lie within the envelope.
\end{remark}

\subsection{A sharp continuity bound and an a priori envelope width}
\label{sec:sharp-tv-sub}

The next result sharpens the total-variation continuity of the EVPI functional.
The improvement replaces the crude constant $4B$ by one built from the
oscillations of the net-benefit functions, $\operatorname{osc}(h)=\sup_\theta
h(\theta)-\inf_\theta h(\theta)$, and we show this constant is the best possible of
its form. Total variation is taken as $\|P-Q\|_{\mathrm{TV}}=\sup_{A}|P(A)-Q(A)|
\in[0,1]$.

\begin{proposition}[Sharp Lipschitz constant and envelope width]
\label{prop:sharp-tv}
For all $P,Q$,
\[
 |\EVPI(P)-\EVPI(Q)|\ \le\ L\,\|P-Q\|_{\mathrm{TV}},
 \qquad
 L=\max_{a\in\A}\operatorname{osc}(M-\NB_a),
\]
with $\operatorname{osc}$ as above. The constant satisfies
$L\le\operatorname{osc}(M)+\max_a\operatorname{osc}(\NB_a)\le 4B$ and cannot in
general be reduced. Consequently the EVPI envelope width obeys the a priori bound
$\overline{\EVPI}-\underline{\EVPI}\le L\,\operatorname{diam}_{\mathrm{TV}}
(\Pset)$, where $\operatorname{diam}_{\mathrm{TV}}(\Pset)=\sup_{P,Q\in\Pset}
\|P-Q\|_{\mathrm{TV}}$.
\end{proposition}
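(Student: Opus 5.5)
The plan is to start from the min-affine representation \eqref{eq:min-affine}, $\EVPI(P)=\min_{a\in\A}\E_P[D_a]$ with $D_a=M-\NB_a\ge0$, and reduce everything to a bound on each affine piece.

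\textbf{Step 1: each affine piece.} For a bounded measurable $h$ and probability measures $P,Q$, the standard bound is
\[
|\E_P[h]-\E_Q[h]|\le\operatorname{osc}(h)\,\|P-Q\|_{\mathrm{TV}}.
\]
To prove it, I would use that $P-Q$ has total mass zero, so $\E_P[h]-\E_Q[h]=\int (h-c)\,d(P-Q)$ for any constant $c$. Taking $c=\inf h$ makes $0\le h-c\le\operatorname{osc}(h)$. Splitting with the Hahn decomposition $A$ of $P-Q$ gives
\[
\int (h-c)\,d(P-Q)\le\operatorname{osc}(h)\,(P-Q)(A)\le\operatorname{osc}(h)\,\|P-Q\|_{\mathrm{TV}}.
\]
The reverse inequality follows by symmetry.

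\textbf{Step 2: pass through the minimum.} A minimum of functions each Lipschitz with constant $L_a$ is Lipschitz with constant $\max_a L_a$. Concretely, if $a_Q$ attains the minimum at $Q$, then
\[
\EVPI(P)-\EVPI(Q)\le\E_P[D_{a_Q}]-\E_Q[D_{a_Q}]\le L\|P-Q\|_{\mathrm{TV}},
\]
and the other direction is symmetric. This gives $L=\max_a\operatorname{osc}(M-\NB_a)$.

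\textbf{Step 3: the chain $L\le\operatorname{osc}(M)+\max_a\operatorname{osc}(\NB_a)\le4B$.} The first inequality uses subadditivity, $\operatorname{osc}(f-g)\le\operatorname{osc}(f)+\operatorname{osc}(g)$. The second uses $|M|,|\NB_a|\le B$, so each oscillation is at most $2B$.

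\textbf{Step 4: the width bound.} Take $P,Q$ to be near-optimal points for $\overline{\EVPI}$ and $\underline{\EVPI}$ respectively. Step 2 bounds their difference by $L\|P-Q\|_{\mathrm{TV}}\le L\operatorname{diam}_{\mathrm{TV}}(\Pset)$, and letting the near-optimality slack go to zero gives the stated width bound.

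\textbf{Step 5: sharpness, the main obstacle.} I need a model in which $|\EVPI(P)-\EVPI(Q)|=L\|P-Q\|_{\mathrm{TV}}$ for some $P\neq Q$. I would reuse the two-state example from Theorem~\ref{thm:main}(c). There $M\equiv1$ and $D_0=M-\NB_0$ takes the values $0$ and $2$, so $L=2$, and $\EVPI=1-|2x-1|$. Comparing $x=0$ with $x=1/2$ gives $\|P-Q\|_{\mathrm{TV}}=1/2$ and an EVPI difference of $1=L\cdot\tfrac12$, so equality holds.

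This also shows that the crude constant $4B=4$ cannot be what governs the bound here: the true constant is $2$, matching the oscillation constant exactly. To stress that $\operatorname{osc}(M)$ can matter as well, I may add a variant in which $M$ is non-constant. However, exhibiting attainment of $L$ already establishes that no smaller constant of this form works in general. The care needed is in choosing $P,Q$ on a single linear piece of the minimum, so that the minimizing action is the same at both measures and equality in Step 1 is attained.
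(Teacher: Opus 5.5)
Your proposal is correct and follows the paper's proof step for step. The common steps are the per-piece bound $|\E_P h-\E_Q h|\le\operatorname{osc}(h)\,\|P-Q\|_{\mathrm{TV}}$ from the zero total mass of $P-Q$, passage through the minimum in $\EVPI(P)=\min_a\E_P[D_a]$, the oscillation chain, the supremum for the width bound, and sharpness from the same two-state instance (your $x=0$ mirrors the paper's $e_1$). The differences are cosmetic: you center at $c=\inf h$ and use a Hahn decomposition where the paper centers at the midpoint and uses $|P-Q|(\Theta)=2\|P-Q\|_{\mathrm{TV}}$, and you argue via the minimizing action at $Q$ rather than the generic inequality $|\min_a u_a-\min_a v_a|\le\max_a|u_a-v_a|$.
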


\begin{proof}
For any bounded measurable $h$ and the zero-sum signed measure $\nu=P-Q$, and any
constant $c$, $\int h\,d\nu=\int(h-c)\,d\nu$ because $\nu(\Theta)=0$. Hence
$|\int h\,d\nu|\le\sup_\theta|h-c|\cdot|\nu|(\Theta)$. The total variation of
$\nu$ as a signed measure is $|\nu|(\Theta)=2\|P-Q\|_{\mathrm{TV}}$, and choosing
$c=\tfrac12(\sup h+\inf h)$ minimizes $\sup_\theta|h-c|$ to $\tfrac12
\operatorname{osc}(h)$. Therefore
\begin{equation}
 |\E_P h-\E_Q h|\ \le\ \operatorname{osc}(h)\,\|P-Q\|_{\mathrm{TV}}.
 \label{eq:osc-bound}
\end{equation}
By the representation \eqref{eq:min-affine}, $\EVPI(P)=\min_a\E_P[D_a]$ with
$D_a=M-\NB_a$. Using $|\min_a u_a-\min_a v_a|\le\max_a|u_a-v_a|$ and then
\eqref{eq:osc-bound} with $h=D_a$,
\[
\begin{aligned}
 |\EVPI(P)-\EVPI(Q)|
 &=\Bigl|\min_a\E_P[D_a]-\min_a\E_Q[D_a]\Bigr|\\
 &\le\max_a\bigl|\E_P[D_a]-\E_Q[D_a]\bigr|\\
 &\le\max_a\operatorname{osc}(D_a)\,\|P-Q\|_{\mathrm{TV}},
\end{aligned}
\]
which is the stated bound with $L=\max_a\operatorname{osc}(M-\NB_a)$. Since
$\operatorname{osc}(M-\NB_a)\le\operatorname{osc}(M)+\operatorname{osc}(\NB_a)\le
\operatorname{osc}(M)+\max_b\operatorname{osc}(\NB_b)$ and each oscillation is at
most $2B$, we recover $L\le\operatorname{osc}(M)+\max_a\operatorname{osc}(\NB_a)
\le4B$. The constant $L$ is therefore never worse, and is typically strictly
smaller. The envelope-width corollary follows by taking the supremum over
$P,Q\in\Pset$, since $\overline{\EVPI}-\underline{\EVPI}=\sup_{P,Q}(\EVPI(P)-
\EVPI(Q))$.

For sharpness, use the instance of Theorem~\ref{thm:main}(c) with $\NB_0=(1,-1)$,
$\NB_1=(-1,1)$ and $M\equiv1$, so $D_0=M-\NB_0=(0,2)$ and $D_1=(2,0)$, giving
$\operatorname{osc}(D_0)=\operatorname{osc}(D_1)=2$ and $L=2$. Taking
$P=e_1$ and $Q=(\tfrac12,\tfrac12)$ gives $\|P-Q\|_{\mathrm{TV}}=\tfrac12$,
$\EVPI(P)=0$, $\EVPI(Q)=1$, and $|\EVPI(P)-\EVPI(Q)|=1=L\|P-Q\|_{\mathrm{TV}}$.
The bound is attained, so the factor $L$ multiplying $\|P-Q\|_{\mathrm{TV}}$
cannot be reduced uniformly: there exist models and pairs of measures for which
equality holds. (For a particular model the true Lipschitz constant may be
smaller. What is shown is that the formula is sharp across models.)
\end{proof}

\subsection{Vertex attainment fails for partial and sample information}
\label{sec:vertex-fail}

For EVPI the affine/concave structure produced the exact lower endpoint of
Theorem~\ref{thm:main}(b). The partial- and sample-information functionals lack
this structure. Although the individual posterior expectation
$\E_P[\NB_a\mid Z]=\E_P[\NB_a\mathbf{1}\{Z\}]/P(Z)$ is a ratio of affine
functionals of $P$, the post-information value itself is convex. Writing
$\Delta_Z$ for the measurable deterministic policies $\delta:\mathcal Z\to\A$,
\[
 W_Z(P)=\E_P\!\bigl[\max_a\E_P(\NB_a\mid Z)\bigr]
 =\sup_{\delta\in\Delta_Z}\E_P[\NB_{\delta(Z)}]
\]
is a supremum of maps $P\mapsto\E_P[\NB_{\delta(Z)}]$ that are affine in $P$, and
is therefore convex. The current value $g(P)=\max_a\E_P[\NB_a]$ is also convex, so
$\VOI_Z=W_Z-g$ is in general a difference of convex functions and need not be
convex or concave. The next proposition records that vertex enumeration is
consequently not exact for EVPPI or EVSI, and isolates a condition that restores
upper-endpoint vertex attainment.

\begin{proposition}[Failure of vertex attainment, and a sufficient condition]
\label{prop:evppi-vertex}
Let $\VOI_Z(P)=\E_P\!\bigl[\max_a\E_P(\NB_a\mid Z)\bigr]-g(P)$ denote the
classical EVPPI (for $Z=\phi(\theta)$) or EVSI (for a future dataset $Z$).
\begin{enumerate}
\item[(a)] There exist finitely generated credal sets on which
$\sup_{P\in\Pset}\VOI_Z(P)$ is attained only at a non-extreme point, so the
upper envelope endpoint is not equal to $\max_k\VOI_Z(G_k)$.
\item[(b)] Suppose a single action $a^\ast$ is pre-information optimal for every
$P\in\Pset$, i.e.\ $g(P)=\E_P[\NB_{a^\ast}]$ on $\Pset$. Then $\VOI_Z$ is convex
on $\Pset$. Consequently, for a finitely generated credal set
$\Pset=\operatorname{conv}\{G_1,\ldots,G_K\}$ the upper envelope endpoint equals
$\max_k\VOI_Z(G_k)$. More generally, if $\VOI_Z$ is upper semicontinuous on the
compact convex set $\Pset$, the upper endpoint is attained at an extreme point. (A $Z$-marginal that
is fixed across $\Pset$ is a convenient special case, in which the
post-information value takes the explicit form $\sum_z P(Z{=}z)\max_a
\E_P(\NB_a\mid Z{=}z)$, but it is not needed for the conclusion.)
\end{enumerate}
\end{proposition}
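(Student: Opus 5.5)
For part~(b) I would work directly from the policy representation displayed just before the proposition. Under the hypothesis $g(P)=\E_P[\NB_{a^\ast}]$ on $\Pset$, the current value is affine in $P$ on $\Pset$. Hence
\[
 \VOI_Z(P)=\sup_{\delta\in\Delta_Z}\E_P\bigl[\NB_{\delta(Z)}-\NB_{a^\ast}\bigr]
\]
is a supremum of affine maps. Each map is affine because, for EVPPI, the joint law is a push-forward of $P$, and for EVSI it is $P(\mathrm d\theta)K_d(\mathrm dy\mid\theta)$; both are linear in $P$. So $\VOI_Z$ is convex on $\Pset$.

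For the finitely generated case, convexity and Jensen give
\[
 \VOI_Z\Bigl(\sum_k w_kG_k\Bigr)\le\sum_k w_k\VOI_Z(G_k)\le\max_k\VOI_Z(G_k),
\]
mirroring the proof of Theorem~\ref{thm:main}(b) with the inequality reversed. For the general compact convex case, I would invoke Bauer's maximum principle. A convex upper semicontinuous function on a compact convex subset of a locally convex space attains its maximum at an extreme point. Convexity is the one just proved, upper semicontinuity is assumed, and the extreme-point attainment follows. The fixed-$Z$-marginal remark needs no separate argument; I would simply note that the explicit formula is a special case of the policy representation.

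For part~(a) I would build an explicit small counterexample, modeled on Theorem~\ref{thm:main}(c). The idea is to make $g$ strictly convex with a kink at an interior point while $W_Z$ stays affine there.

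\begin{itemize}
\item Take $\A=\{0,1\}$ and $\Theta=\{1,2\}\times\{1,2\}$, with $\theta=(\phi,\eta)$, so that $Z=\phi$ is partial information.
\item Choose the net benefits so that, conditional on either value of $\phi$, both actions have the same conditional expectation whatever the law of $\eta$. For instance, let $\NB_a$ depend on $\eta$ only, with the same antisymmetric payoffs as in Theorem~\ref{thm:main}(c), and let $\phi$ be independent of $\eta$ or uninformative.
\item If that makes $\VOI_Z\equiv0$, I would instead let $\phi=\eta$ on part of the space.
\end{itemize}

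A cleaner construction is $\Pset=\operatorname{conv}\{G_1,G_2\}$ with $G_1,G_2$ product measures that share the same law of $\phi$ but differ in the law of $\eta$. Then $W_Z(P_w)$ is affine in $w$, because the $Z$-marginal is fixed and each conditional is affine. Meanwhile $g(P_w)=|c(2w-1)|$-type, which is strictly convex with a kink at $w=\tfrac12$. Their difference $\VOI_Z$ is therefore concave with a strict interior maximum at $w=\tfrac12$.

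I would then compute $\VOI_Z(G_1)$, $\VOI_Z(G_2)$ and $\VOI_Z(P_{1/2})$ numerically, and check that the midpoint value is strictly larger. A convenient choice is to take $\phi$ to reveal $\eta$ with probability $\tfrac12$, so that $W_Z$ is nontrivial.

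The main obstacle is tuning this example so that the supremum is attained \emph{only} at a non-extreme point. That requires $W_Z$ not to inherit a compensating kink. Keeping the $Z$-marginal and the conditional-optimal policy fixed across $\Pset$ makes $W_Z$ affine and settles it.
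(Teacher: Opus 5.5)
Your part (b) is correct and is essentially the paper's argument. $W_Z$ is a supremum over policies of maps that are affine in $P$; your remark that the push-forward and $P(\mathrm d\theta)K_d(\mathrm dy\mid\theta)$ are linear in $P$ is exactly what is needed. Under the common action $g$ is affine, Jensen over the generators gives the finite case, and Bauer's maximum principle gives the upper semicontinuous case.

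Part (a), however, is not yet proved. It is a pure existence claim, so its proof \emph{is} a verified example, and none of your concrete choices works:
\begin{itemize}
\item Take $\NB_a$ depending on $\eta$ only and $G_k=\mu\otimes\nu_k$. Every mixture $P_w=\mu\otimes(w\nu_1+(1-w)\nu_2)$ is again a product, so $\E_{P_w}(\NB_a\mid\phi)=\E_{P_w}[\NB_a]$, $W_Z=g$ and $\VOI_Z\equiv0$. Your ``cleaner construction'' therefore fails for the same reason as your first bullet.
\item ``Let $\phi$ reveal $\eta$ with probability $\tfrac12$'' contradicts the product structure and is never specified.
\item The structural step is wrong as stated. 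A fixed $Z$-marginal makes each $\E_{P_w}(\NB_a\mid Z{=}z)$ affine in $w$. But $W_Z(P_w)=\sum_z P(Z{=}z)\max_a\E_{P_w}(\NB_a\mid Z{=}z)$ is a sum of maxima of affine functions, hence only convex.
\item Your closing sentence adds the needed condition, a conditional-optimal policy fixed across $\Pset$. You never show this is compatible with an interior kink of $g$. Nor do you check the slope condition that makes the concave difference peak strictly inside; concavity alone permits a maximum at a vertex. Also, $|c(2w-1)|$ is convex but not strictly convex.
\item The statement also covers EVSI, which needs a sentence. The paper takes $Y=\phi(\theta)$ under a degenerate kernel, then perturbs with small noise and uses continuity in the finite setting.
\end{itemize}

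Your mechanism can be completed by letting the payoff depend on the observed coordinate as well. Keep $P_w=\mu\otimes\nu_w$ with $\mu$ uniform on $\phi\in\{1,2\}$ and $\nu_w(\eta{=}1)=w$. Take $\NB_0\equiv0$ and $\NB_1=(0,-2,2,0)$ on $(1,1),(1,2),(2,1),(2,2)$. Then $\E_{P_w}(\NB_1\mid\phi{=}1)=-2(1-w)\le0$ and $\E_{P_w}(\NB_1\mid\phi{=}2)=2w\ge0$. So the policy $\delta(1)=0$, $\delta(2)=1$ is optimal for every $w$, and $W_Z(P_w)=w$ is affine. Since $g(P_w)=\max\{0,2w-1\}$, we get $\VOI_Z(P_w)=\min\{w,1-w\}$. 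This is $0$ at both generators and attains its maximum only at $w=\tfrac12$.

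The paper uses the same product structure (fixed signal marginal $\tfrac12$, $P(s{=}1\mid z)=t$) but with $\NB_1=(1,-2,3,-1)$. There the conditional-optimal action switches at $t=\tfrac13$ and $t=\tfrac34$, so $W_Z$ is not affine. The explicit piecewise-linear formula still gives a unique interior maximum $\tfrac{5}{14}$ at $t=\tfrac47$. Either way, it is the explicit computation that establishes (a).
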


\begin{proof}
(a) Take the partial-information structure with a partial parameter $z\in\{0,1\}$
observed and a nuisance state $s\in\{0,1\}$ unobserved, so $\Theta=\{(z,s)\}$ has
four points and $Z=z$. With net benefits chosen so that the conditionally optimal
action under $z$ depends on the within-$z$ weighting of $s$, no single action is
pre-information optimal throughout, so by the difference-of-convex structure above
$\VOI_Z$ need not be convex or concave and its supremum can be strictly interior.
The following explicit instance makes this concrete.
Set $\NB_0\equiv0$ and $\NB_1=(1,-2,3,-1)$ on the states
$(0,0),(0,1),(1,0),(1,1)$, fix the signal marginal at $P(z{=}0)=P(z{=}1)=\tfrac12$,
and let a single parameter $t\in[0,1]$ set the within-signal weight
$P(s{=}1\mid z)=t$ for both signals, so the credal set is the segment with
generators $t=0$ and $t=1$. Writing out the post-information and current values,
\[
\begin{aligned}
 \VOI_Z(t)={}&\tfrac12\max\{0,\,1-3t\}+\tfrac12\max\{0,\,3-4t\}\\
 &{}-\max\{0,\,2-\tfrac72 t\},
\end{aligned}
\]
which is $0$ at both generators $t=0$ and $t=1$ but rises to the interior maximum
$\tfrac{5}{14}$ at $t=\tfrac47$. Vertex enumeration evaluates only $t\in\{0,1\}$,
returns $0$, and misses the maximum, so the upper envelope endpoint is not
$\max_k\VOI_Z(G_k)$. The same instance yields an EVSI counterexample by taking the
study datum to be $Y=Z$ under the degenerate sampling kernel
$K(\mathrm dy\mid\theta)=\delta_{\phi(\theta)}(\mathrm dy)$, so that EVSI equals
EVPPI exactly. Adding arbitrarily small observation noise gives a non-degenerate
sampling design. Because the state, action, and signal spaces are finite, the
finitely many conditional expected net benefits vary continuously with the noise
level, so the strict gap between the interior value $\tfrac{5}{14}$ and the endpoint
value $0$ persists for small enough noise and the EVSI maximum stays interior. The
full piecewise-linear calculation is given in
Supplement Section~5.

(b) The post-information value can be written as
\[
W_Z(P)=\sup_{\delta\in\Delta_Z}\E_P[\NB_{\delta(Z)}],
\]
a supremum of affine functionals of $P$, and is therefore convex on $\Pset$ with
no restriction on the $Z$-marginal. Under $g(P)=\E_P[\NB_{a^\ast}]$, the current
value is affine, so $\VOI_Z=W_Z-g$ is convex. For a finitely generated
$\Pset=\operatorname{conv}\{G_1,\ldots,G_K\}$,
\[
\VOI_Z\!\left(\sum_k w_kG_k\right)
\le \sum_k w_k\VOI_Z(G_k)
\le \max_k\VOI_Z(G_k),
\]
so the upper endpoint equals $\max_k\VOI_Z(G_k)$ with no continuity requirement.

For a general compact convex $\Pset$, extreme-point attainment follows from the
maximum principle for convex functions provided $\VOI_Z$ is upper semicontinuous.
This holds, for example, for finite-signal or deliberately discretized EVPPI/EVSI
policy classes, where $W_Z$ is a maximum of finitely many continuous affine maps
and hence continuous. Without a common pre-information action, $g$ is itself a genuine
maximum, so $\VOI_Z$ is a difference of convex functions and may have an interior
supremum, as in part~(a).
\end{proof}

\begin{remark}
Proposition~\ref{prop:evppi-vertex} delimits the reach of the exact computation.
The finite generator search is exact for the EVPI lower endpoint
(Theorem~\ref{thm:main}(b)) and, under a common pre-information action, for the
EVPPI/EVSI upper envelope endpoint, but not for EVPPI/EVSI envelopes in general.
Even under that condition, the restoration is one-sided. The functional is then
\emph{convex}, so its maximum is at a generator but its minimum can be interior,
and the lower endpoint may still require an interior search. This is the formal
justification for the two-stage outer search of
Section~\ref{sec:implementation}. Where the structural conditions hold, evaluating the generators delivers the upper endpoint. Elsewhere the discretized envelope
is an inner approximation that must be refined near sensitive endpoints.
\end{remark}

\subsection{The sign of rule-specific VOI and the role of constant policies}
\label{sec:sign-voi}

The final structural result concerns the \emph{sign} of the rule-specific value
of information. The possibility that information can have low or negative value
outside expected-utility single-prior theory is well known. Wakker showed how
violations of independence can generate information aversion, Schlee showed that
perfect information remains nonnegative under a weak dominance condition, and
dilation shows how conditioning can increase imprecision
\citep{wakker1988nonexpected,schlee1991perfect,seidenfeld1993dilation}. The result below is not a new general theorem on information aversion. It is the
lower-expectation, normal-form statement needed for the present VOI framework.
Classically, perfect or sample information can never have a negative value because
an agent who can ignore what is learned is never worse off for having learned it.
Under imprecision this guarantee becomes conditional, and the key sufficient
condition is the availability of policies that ignore the signal. Without it,
nonnegativity is no longer guaranteed. Recall from Section~\ref{sec:framework}
that a policy $\delta\in\Delta_Z$ maps the signal $Z$ to an action, that a
\emph{constant} policy chooses the same action for every value of $Z$, and that
the lower-expectation value of information is
$\VOI_Z^{\mathrm{LE}}=V_Z^{\mathrm{LE}}-V_0^{\mathrm{LE}}$, with
$V_Z^{\mathrm{LE}}=\sup_{\delta\in\Delta_Z}\inf_{P\in\Pset_Z}\E_P[\NB_{\delta(Z)}]$
and $V_0^{\mathrm{LE}}=\max_{a\in\A}\inf_{P\in\Pset}\E_P[\NB_a]$, where
$\Pset_Z$ extends $\Pset$ in the sense that its $\theta$-marginal set is $\Pset$.

\begin{theorem}[Sign of rule-specific VOI]
\label{thm:sign}
Let $Z$ be information observed before the final action is chosen.
\begin{enumerate}
\item[(a)] \textnormal{(Nonnegativity with constant policies.)} If $\Delta_Z$
contains all constant policies, then $\VOI_Z^{\mathrm{LE}}\ge0$.
\item[(b)] \textnormal{(Negativity without them.)} If $\Delta_Z$ excludes the
constant policies, so the agent must act on the signal, then
$\VOI_Z^{\mathrm{LE}}$ can be strictly negative, so cost-free information can be
``bad''. Explicitly, let $\A=\{0,1\}$ and $\theta=(z,s)\in\{0,1\}^2$ with $z$ the
observed signal and $s$ an unobserved nuisance state, ordered
$(0,0),(0,1),(1,0),(1,1)$, and take
\[
 \NB_0=(0,0,0,0),\qquad \NB_1=(1,-1,-1,1),
\]
with $\Pset_Z=\operatorname{conv}\{G_1,G_2\}$, $G_1=(0,\tfrac12,\tfrac12,0)$,
$G_2=(\tfrac12,0,0,\tfrac12)$. Then $V_0^{\mathrm{LE}}=0$, every signal-responsive
policy has guaranteed value $-\tfrac12$, so $\VOI_Z^{\mathrm{LE}}=-\tfrac12<0$
over the restricted class $\{\delta:\delta(0)\ne\delta(1)\}$, whereas
$\VOI_Z^{\mathrm{LE}}=0$ over the full class containing the constant policies.
\end{enumerate}
\end{theorem}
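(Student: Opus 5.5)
Part (a) follows from the fact that restricting the supremum to the constant policies can only lower it, and that a constant policy has the same guaranteed value before and after observing $Z$. Fix $a\in\A$ and write $\delta_a\equiv a$ for the corresponding constant policy, which lies in $\Delta_Z$ by hypothesis. Then $\E_P[\NB_{\delta_a(Z)}]=\E_P[\NB_a(\theta)]$ depends on $P\in\Pset_Z$ only through its $\theta$-marginal. By the standing requirement that the $\theta$-marginals of $\Pset_Z$ are exactly $\Pset$, we get
\[
\inf_{P\in\Pset_Z}\E_P[\NB_{\delta_a(Z)}]=\inf_{P\in\Pset}\E_P[\NB_a].
\]
Hence $V_Z^{\mathrm{LE}}\ge\inf_{P\in\Pset}\E_P[\NB_a]$ for every $a$. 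Taking the maximum over the finite set $\A$ gives $V_Z^{\mathrm{LE}}\ge V_0^{\mathrm{LE}}$, that is, $\VOI_Z^{\mathrm{LE}}\ge0$. The only point needing care is the marginal identity. It must be equality of the sets of $\theta$-marginals, not mere inclusion, since that is what makes the two infima agree. This holds for both the push-forward construction (EVPPI) and the kernel construction (EVSI) of Section~\ref{sec:decision-rule-ip}.

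For part (b) I would verify the explicit instance by direct computation, using affinity in $P$ so that every infimum over $\Pset_Z=\operatorname{conv}\{G_1,G_2\}$ is a minimum over the two generators.

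First, the current value. We have $\E_{G_1}[\NB_1]=\tfrac12(-1)+\tfrac12(-1)=-1$ and $\E_{G_2}[\NB_1]=\tfrac12(1)+\tfrac12(1)=1$. So $\inf_P\E_P[\NB_1]=-1$, while $\NB_0\equiv0$ gives $0$. Therefore $V_0^{\mathrm{LE}}=0$.

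Second, the signal-responsive policies. There are two of them.
\begin{itemize}
\item The policy $\delta(0)=1,\delta(1)=0$ earns $\NB_1$ only on the states $(0,0),(0,1)$. Its value is $G_1$: $\tfrac12(-1)=-\tfrac12$ and $G_2$: $\tfrac12(1)=\tfrac12$, so its guaranteed value is $-\tfrac12$.
\item The policy $\delta(0)=0,\delta(1)=1$ earns $\NB_1$ on the states $(1,0),(1,1)$. Its value is $G_1$: $\tfrac12(-1)=-\tfrac12$ and $G_2$: $\tfrac12(1)=\tfrac12$, so its guaranteed value is again $-\tfrac12$.
\end{itemize}
Thus the restricted class gives $V_Z^{\mathrm{LE}}=-\tfrac12$ and $\VOI_Z^{\mathrm{LE}}=-\tfrac12<0$.

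Third, the full class. The constant policies contribute $0$ (for $\delta\equiv0$) and $-1$ (for $\delta\equiv1$). So $V_Z^{\mathrm{LE}}=\max\{0,-1,-\tfrac12,-\tfrac12\}=0$ and $\VOI_Z^{\mathrm{LE}}=0$, which agrees with part (a).

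No step is a genuine obstacle. The one subtlety I expect is conceptual rather than computational. The generators must not be product measures whose $\theta$-marginal set differs from $\Pset$. Here $\Pset_Z$ is simply identified with $\Pset$ on $\Theta$, with $Z=z$ a coordinate, which is the push-forward case. I would state this explicitly so that the counterexample sits inside the EVPPI setting. I would also remark that the negativity arises because the two admissible measures flip which signal value favours action $1$, so any policy that is forced to respond to $z$ is exploited by one generator.
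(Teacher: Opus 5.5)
Your proposal is correct and follows the paper's proof essentially step for step. In (a) each constant policy keeps its pre-information guaranteed value because the $\theta$-marginals of $\Pset_Z$ are $\Pset$, and in (b) the two signal-responsive gambles are evaluated at $G_1,G_2$ to give $-\tfrac12$ against $V_0^{\mathrm{LE}}=0$. One small correction to your side remark: for the inequality in (a) only the inclusion of the $\theta$-marginals of $\Pset_Z$ in $\Pset$ is needed, since an infimum over fewer measures is no smaller, so the equality you invoke is sufficient but not necessary.
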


\begin{proof}
(a) A constant policy chooses the same action $a$ regardless of $Z$. Because the
$\theta$-marginals of $\Pset_Z$ are exactly $\Pset$, the lower expected value of
the constant policy ``always $a$'' computed over $\Pset_Z$ equals
$\inf_{P\in\Pset}\E_P[\NB_a]$, its current guaranteed value. Hence every action
available before observing $Z$ is available afterwards as a constant policy with
the same guaranteed value, so $V_Z^{\mathrm{LE}}=\sup_{\delta\in\Delta_Z}
\inf_{P}\E_P[\NB_{\delta(Z)}]\ge\max_{a\in\A}\inf_{P\in\Pset}\E_P[\NB_a]
=V_0^{\mathrm{LE}}$, giving $\VOI_Z^{\mathrm{LE}}\ge0$.

(b) Write a measure as $(q_{00},q_{01},q_{10},q_{11})$. For the constant
policies, $\E_P[\NB_0]=0$ for all $P$, so $\inf_{P\in\Pset_Z}\E_P[\NB_0]=0$, while
$\E_{G_1}[\NB_1]=\tfrac12(-1)+\tfrac12(-1)=-1$ gives $\inf_P\E_P[\NB_1]\le-1$.
Hence $V_0^{\mathrm{LE}}=\max\{0,\inf_P\E_P[\NB_1]\}=0$, attained by action $0$.
For the signal-responsive policy $\delta=(0,1)$ the realized net benefit is
$\NB_0$ on $z=0$ (states $(0,0),(0,1)$) and $\NB_1$ on $z=1$ (states
$(1,0),(1,1)$), i.e.\ the gamble $(0,0,-1,1)$, with lower expectation
$\min\{G_1\cdot(0,0,-1,1),\,G_2\cdot(0,0,-1,1)\}=\min\{-\tfrac12,\tfrac12\}
=-\tfrac12$. By the $z\leftrightarrow s$ symmetry the policy $(1,0)$ also has
lower expectation $-\tfrac12$. Thus the best guaranteed value over the restricted
class is $-\tfrac12$ and $\VOI_Z^{\mathrm{LE}}=-\tfrac12-0=-\tfrac12<0$.
Including the constant policies restores the maximum to $0$ and $\VOI_Z^{\mathrm{LE}}=0$.
\end{proof}

\begin{remark}
Theorem~\ref{thm:sign} is the single-decision, static counterpart of the sequential
phenomenon identified by Bradley and Steele \citep{bradley2016free}. Under a
$\Gamma$-maximin attitude, cost-free information can be ``bad'' when the agent cannot fall back on a
policy that ignores the signal. Here the safeguard is
explicit and elementary, namely the presence of the constant policies in
$\Delta_Z$. Part~(a) shows this condition is sufficient for nonnegativity, while part~(b)
shows that without the constant policies nonnegativity is no longer guaranteed.
Because the argument uses only the availability of constant policies, part~(a)
holds for any signal $Z$, hence for EVPPI ($Z=\phi(\theta)$) and EVSI ($Z=Y_d$).
In the credal-set lower-expectation setting it thus mirrors, for partial and
sample information, the perfect-information nonnegativity of
\citet{schlee1991perfect}, with the constant policies playing the role of the weak
dominance condition. The result is therefore consistent with the broader literature on information
aversion, dilation, and sequential decisions under lower previsions
\citep{wakker1988nonexpected,schlee1991perfect,seidenfeld1993dilation,bradley2016free,seidenfeld2004contrast,troffaes2007decision,augustin2001ambiguous,huntley2012normal}. In the
research-prioritization use of VOI the analyst is always free not to act on
inconclusive evidence, so the constant policies are present and
$\VOI_Z^{\mathrm{LE}}\ge0$ holds.
\end{remark}

\section{A unified estimation framework}
\label{sec:implementation}

The computations below use a common template with an \emph{outer}
representation of the admissible set wrapped around an \emph{inner}
single-measure Monte Carlo evaluation. The outer layer may enumerate admissible
measures directly, list generators of a convex credal set, or discretize
continuous components. The inner layer is run once per represented measure or
generator.

The distinction between estimand and estimator is explicit. The EVPI endpoint
identities in Theorem~\ref{thm:main} and Remark~\ref{rem:upper-lp} are exact
statements about population functionals. The reported numerical values are Monte
Carlo estimates of those functionals. The EVPPI and EVSI calculations used in the
worked application are validated single-measure approximations, not exact analytic
EVPPI/EVSI evaluations. Their role is to illustrate how standard VOI estimators
can be wrapped in an outer credal-set analysis without confusing numerical
approximation with the population estimands.

Represent the outer layer by $\Hset=\{h_1,\ldots,h_R\}$, where $h_m$ labels a
simulable measure $P_{h_m}$ or a generator. For each $h_m$, draw
$\theta_m^{(i)}\sim P_{h_m}$ and store
\begin{equation}
 S_{m,i,a}=\NB_a\!\left(\theta_m^{(i)}\right),\qquad a\in\A,
 \label{eq:inner-substrate}
\end{equation}
with $z_m^{(i)}=\phi(\theta_m^{(i)})$ for EVPPI or a simulated future dataset
$Y_d^{(i)}$ for EVSI. The empirical lower-expectation current value is
\begin{equation}
 \widehat V_0^{\mathrm{LE}}=\max_{a\in\A}\ \min_{1\le m\le R}\
 \frac1N\sum_{i=1}^N S_{m,i,a}.
 \label{eq:hat-current}
\end{equation}
For EVPI, the common pointwise optimal policy gives
\begin{equation}
 \widehat V_{\Theta}^{\mathrm{LE}}
 =\min_{1\le m\le R}\ \frac1N\sum_{i=1}^{N}\max_{a\in\A} S_{m,i,a},
 \qquad
 \widehat{\EVPI}{}^{\mathrm{LE}}_{\Pset}=\widehat V_{\Theta}^{\mathrm{LE}}-
 \widehat V_0^{\mathrm{LE}}.
 \label{eq:hat-evpi}
\end{equation}
For EVPPI or EVSI under the lower-expectation rule, one searches over a policy class.
\begin{equation}
 \widehat V_Z^{\mathrm{LE}}
 =\max_{\delta\in\Delta_Z}\ \min_{1\le m\le R}\
 \frac1N\sum_{i=1}^{N} S_{m,i,\delta(z_m^{(i)})},
 \qquad
 \widehat{\VOI}{}^{\mathrm{LE}}_{Z}=\widehat V_Z^{\mathrm{LE}}-\widehat V_0^{\mathrm{LE}}.
 \label{eq:hat-policy}
\end{equation}
This search is finite only when $Z$ has few states. For continuous or finely
discretized signals, one must use a structured policy class, dynamic programming,
regression-based decision rule, or another controlled approximation. Bayes-optimal
policies under individual admissible measures are only a heuristic screen and do
not generally contain the maximin-optimal policy.

For a conventional fixed-measure target $T(P)$, a validated single-measure
estimator can be applied at each represented measure,
\begin{equation}
 \bigl[\widehat{\underline T},\widehat{\overline T}\bigr]
 =\Bigl[\min_m\widehat T(h_m),\ \max_m\widehat T(h_m)\Bigr].
 \label{eq:hat-envelope}
\end{equation}
This is exact when $\Hset$ enumerates the admissible measures. If $\Hset$ lists
generators of a convex credal set, generator extrema are exact only when justified
by structure. For EVPI, the lower endpoint is generator-exact
(Theorem~\ref{thm:main}(b)) and the upper endpoint uses the LP of
Remark~\ref{rem:upper-lp}. EVPPI/EVSI generator ranges are generally inner
approximations unless Proposition~\ref{prop:evppi-vertex}(b) applies. A grid over a
continuous component is also an inner approximation and can understate the envelope
width, so threshold-sensitive cases require refinement.

In the application, single-measure EVSI is estimated by a regression-based
posterior-mean rule with common random numbers under a normal--normal update of the
treatment effect. This is a finite Monte Carlo regression approximation, not an
exact nested posterior estimator. Details, ordering checks, and variance diagnostics
are in the supplementary material.

\begin{workflowalgorithm}{Unified IP-VOI computation (lower-expectation rule shown, with \texorpdfstring{$\rho$}{rho} substituting in the inner aggregation).}
\label{alg:unified}
 \item \textbf{Admissible set.} Build $\Hset=\{h_1,\ldots,h_R\}$ representing $\Pset$ by measures, generators, or a discretized grid, recording the reference index $m_0$ if used.
 \item \textbf{Decision model and target.} Specify $\A$, $\Theta$, $\NB_a(\theta)$, and $Z$ ($\theta$ for EVPI, $\phi(\theta)$ for EVPPI, $Y_d$ for EVSI).
 \item \textbf{Inner simulation.} For each represented measure or generator, draw $\theta_m^{(i)}$, form $S_{m,i,a}$ via \eqref{eq:inner-substrate}, and store $z_m^{(i)}$ for EVPPI/EVSI.
 \item \textbf{Estimands.} Compute rule-specific EVPI by \eqref{eq:hat-evpi}, rule-specific EVPPI/EVSI by \eqref{eq:hat-policy} when the policy class is finite or controlled, and fixed-measure envelopes by validated single-measure estimators, using structural endpoint calculations for generator-defined convex credal sets when needed.
 \item \textbf{Stability.} Report reference and endpoint values, endpoint-defining assumptions, threshold classification, and Monte Carlo error.
\end{workflowalgorithm}

\section{Worked application}
\label{sec:application}

\subsection{A decision model with a credal set of inputs}

To illustrate the framework on a concrete credal set, we use the Heath--Baio
chemotherapy adverse-event decision model, a standard value-of-information testbed
\citep{strong2014evppi,heath2020evsi,kunst2020evsi}. The Collaborative Network for
Value of Information distributes a public reference implementation \citep{voi2025}.
At a willingness to pay of $\pounds 20{,}000$ per quality-adjusted life year, the
published single-measure analysis reports an EVPPI of about $\pounds 333$ and an
EVSI of about $\pounds 242$ for a two-arm trial of $150$ patients per arm. We use
these as numerical anchors rather than identical estimands. Our validation targets the
treatment-effect EVPPI for $\ell$, whereas the published EVPPI is for adverse-event
probabilities, and the calibration is not a bit-for-bit re-execution of the
reference software. The example places a conventional reference-measure VOI inside
a credal-set envelope and exhibits both decision instability across admissible
measures and the structural endpoint phenomena proved above.

\subsection{Decision structure and admissible probability measures}
\label{sec:case-structure}

The model compares two strategies for managing chemotherapy adverse events.
Strategy $a_0$ is standard care. Strategy $a_1$ is a side-effect-reducing
intervention that lowers the rate at which managed (ambulatory) adverse events
progress to inpatient (hospital) care, at a higher per-cycle drug cost. A four-state Markov model
(ambulatory care, hospital care, recovery, death) is run over a short adverse-event
horizon. Each cycle accrues state-specific utility and cost, and the drug cost of
the active strategy is added while the patient is alive. Net benefit for strategy
$a$ is
\begin{equation}
 \NB_a=\lambda\, Q_a(\theta)-C_a(\theta),\qquad \lambda=20{,}000,
 \label{eq:case-nb}
\end{equation}
with $Q_a$ the accrued quality-adjusted life-years over the horizon and
$C_a$ the accrued cost. Larger values are preferred. The intervention acts on the
ambulatory-to-hospital transition through a log odds ratio
$\ell\in\mathbb R$, $p^{\mathrm{trt}}=\operatorname{logit}^{-1}
\{\operatorname{logit}(p^{\mathrm{soc}})+\ell\}$, so a more negative $\ell$ is a
larger clinical benefit.

The reference probability measure $P_0$ used four choices. These were a pooled
treatment-effect synthesis for $\ell$, a registry source for the baseline
ambulatory-to-hospital risk, a pessimistic source for the hospital mortality
risk, and zero correlation between the efficacy and harm parameters. The
admissible set $\Pset$ crossed four sources of imprecision. These were three
treatment-effect evidence specifications for $\ell$ (trial-only, pooled, and a
conservative down-weighting), two baseline-risk sources, two hospital-mortality
sources, and a correlation between the efficacy and harm parameters ranging over
$[-0.4,0.4]$. The three discrete dimensions generate $3\times2\times2=12$
generating measures $G_1,\ldots,G_{12}$ at zero correlation, whose convex hull we
denote
\[
 \mathcal{C}_0=\operatorname{conv}\{G_1,\ldots,G_{12}\}
 \qquad(\text{efficacy--harm correlation fixed at }0).
\]
This finitely generated set is the domain of the EVPI envelope and threshold
analysis below, and is distinct from the reference \emph{measure} $P_0$, which is
itself one of the generators. Treating $\mathcal{C}_0$ as a credal set means
adopting the coherent convex closure of the 12 generator measures. The convex
combinations $\sum_k w_k G_k$ are included as admissible precise measures in the
closed uncertainty model, and no single vector of weights is selected or
interpreted as a defended model average, so the admissible set is convex by
construction. Were the 12 measures instead treated as a nonconvex family of
distinct scenarios, the upper end of the reported EVPI range would be the generator
maximum (about $\pounds 349$) rather than the convex-hull value (about
$\pounds 514$). The analysis below is therefore explicitly a convex-credal-set
analysis, and we report the generator maximum separately so the distinction is
visible. The efficacy--harm correlation adds a continuous
dimension, explored separately on a grid as a sensitivity analysis rather than
folded into the convex-hull optimization. Table~\ref{tab:case-admissible-set}
summarizes the construction.

\begin{table}[H]
\centering
\caption{Construction of the admissible family $\Pset$ for the chemotherapy example.
Each row is one source of imprecision and the full admissible family is their joint range. Here $\ell$ is
the treatment-effect log-odds-ratio for the ambulatory-to-hospital transition. The
three discrete dimensions give $3\times2\times2=12$ generating measures
$G_1,\ldots,G_{12}$ whose convex hull is $\mathcal{C}_0$ (the set used for
Tables~\ref{tab:chemo-evpi}--\ref{tab:chemo-thresholds}, with correlation fixed at
zero), while the continuous
efficacy--harm correlation is handled separately on a grid as a sensitivity
analysis (an inner approximation, refined where it matters).}
\label{tab:case-admissible-set}
\small
\resizebox{\textwidth}{!}{%
\begin{tabular}{@{}llc@{}}
\toprule
Source of imprecision & Admissible values & Representation \\
\midrule
Treatment effect $\ell$ & trial-only, pooled, conservative & $3$ choices \\
Baseline ambulatory$\to$hospital risk & registry, trial & $2$ choices \\
Hospital mortality risk & optimistic, pessimistic & $2$ choices \\
Efficacy--harm correlation & interval $[-0.4,0.4]$ & continuous grid \\
\midrule
\multicolumn{2}{@{}l}{Zero-correlation generator measures} & $12$ \\
\bottomrule
\end{tabular}%
}
\end{table}

Each member of the admissible family is a complete probability measure under which
the model can be simulated and conventional EVPI, EVPPI, and EVSI estimated. Tables
\ref{tab:chemo-evpi} and \ref{tab:chemo-thresholds} report exact EVPI endpoints over
$\mathcal{C}_0$, the correlation-zero convex hull. Because $\mathcal{C}_0$ is finitely generated and EVPI has the structure established in Theorem~\ref{thm:main}, its
endpoints are computed exactly apart from Monte Carlo error, the lower endpoint by
generator enumeration (Theorem~\ref{thm:main}(b)) and the upper endpoint by the
linear program of Remark~\ref{rem:upper-lp}. The continuous correlation dimension is
explored separately by grid refinement, so its contribution to the envelope is an
inner approximation that we refine where it matters (Section~\ref{sec:case-evpi-evppi}).

\subsection{EVPI, EVPPI, and a decision that is not stable across the set}
\label{sec:case-evpi-evppi}

Under the reference measure $P_0$, the side-effect-reducing intervention $a_1$
had the higher expected net benefit, by about $\pounds 150$ per patient.
Conventional EVPI under $P_0$ was approximately $\pounds 329$ per patient (Monte
Carlo standard error about $\pounds 4$ from a sample of $2\times10^4$ draws),
and EVPPI for the treatment-effect parameter was approximately $\pounds 329$ as
well, essentially the whole of the EVPI, confirming that the treatment effect
dominates decision uncertainty under the reference measure. Over the
correlation-zero convex hull $\mathcal{C}_0$, the fixed-measure EVPI
envelope ran from about $\pounds 84$ to about $\pounds 514$ per patient
(Table~\ref{tab:chemo-evpi}). The lower endpoint is attained at a generator and is
computed exactly by vertex enumeration (Theorem~\ref{thm:main}(b)). The upper
endpoint is attained at an \emph{interior} mixture of two generators and is
computed by the linear program of Remark~\ref{rem:upper-lp}, with the largest
single-generator EVPI ($\pounds 349$) understating it because the concave EVPI
functional need not be maximized at a vertex. The reference value lies in the
interior of this envelope. Although $a_1$ is preferred under $P_0$, that preference
is not robust across $\mathcal{C}_0$, and the conventional EVPI varies approximately
sixfold between the most and least favorable points of $\mathcal{C}_0$.

\begin{table}[H]
\centering
\caption{Conventional EVPI per patient under the reference measure $P_0$, at the
generator extremes, and at the convex-hull envelope endpoints over $\mathcal{C}_0$, the
correlation-zero convex hull (Monte Carlo standard
errors of about $\pounds 2$--$4$). The lower endpoint is attained at a generator
(vertex enumeration is exact for it). The upper endpoint is attained at an interior
mixture, $0.37\,G_{\mathrm A}+0.63\,G_{\mathrm B}$ with $G_{\mathrm A}=$
trial-only/registry/optimistic and $G_{\mathrm B}=$ conservative/registry/optimistic,
obtained from the linear program of Remark~\ref{rem:upper-lp}. The largest single
generator EVPI ($\pounds 349$) understates it, and at that mixture the two
strategies tie in expected net benefit. Generator identities among near-equal EVPI
values are Monte Carlo--sensitive. The lower endpoint ($\pounds 84$) and a second
trial-only generator differ by under a pound, and the two largest generators
($\approx\pounds 348$--$349$) carry opposite preferred strategies, so the envelope
\emph{values} are the primary estimands rather than the corner labels. The decision
flip itself rests on the clear separation between the trial-only generators (which
prefer $a_1$) and the conservative generators (which prefer $a_0$), so the
treatment decision, not merely the magnitude of VOI, is unstable across $\mathcal{C}_0$.}
\label{tab:chemo-evpi}
\small
\resizebox{\textwidth}{!}{%
\begin{tabular}{@{}llcr@{}}
\toprule
Measure & Effect / baseline / mortality & Preferred & EVPI \\
\midrule
Reference $P_0$ (generator) & pooled / registry / pessimistic & $a_1$ & $\pounds 329$ \\
Envelope lower endpoint (vertex) & trial-only / registry / pessimistic & $a_1$ & $\pounds 84$ \\
Largest single generator & pooled / trial / pessimistic & $a_0$ (near tie) & $\pounds 349$ \\
Envelope upper endpoint (interior) & $0.37\,G_{\mathrm A}+0.63\,G_{\mathrm B}$ & tie & $\pounds 514$ \\
\bottomrule
\end{tabular}%
}
\end{table}

Unlike a setting in which one strategy dominates throughout, here the
\emph{preferred strategy changes within the admissible set}. The
side-effect-reducing intervention $a_1$ is preferred under the trial-only effect
specification (and under most pooled-effect generators), whereas standard care
$a_0$ is preferred under the conservative effect specification. The flip between
these two effect specifications is well separated and not an artifact of Monte
Carlo noise. (One pooled-effect corner, with the trial baseline and pessimistic
mortality, also tips to $a_0$, but only by a near-tie margin, so the decision flip
does not rest on it.)
The treatment recommendation, not merely the value of further research, is
therefore measure-dependent. This is exactly the situation in which reporting a
single reference EVPI is most misleading, and in which the threshold reading of
Section~\ref{sec:misleading-standard-voi} is most useful
(Table~\ref{tab:chemo-thresholds}).

\begin{table}[H]
\centering
\caption{Stability of the research conclusion across value thresholds. Here $\tau$
is the per-patient value threshold above which acquiring information is judged
worthwhile, $P_0$ is the reference measure (EVPI $\approx\pounds 329$), and the
fixed-measure EVPI envelope is $[\pounds 84,\pounds 514]$ over $\mathcal{C}_0$, the
correlation-zero convex hull of the 12 generators. The conclusion is robust when the whole envelope falls on one
side of $\tau$ and assumption-dependent when the envelope straddles it. A
single-measure report is most misleading in the middle row, where $P_0$ alone
gives a clean verdict that the admissible set does not support.}
\label{tab:chemo-thresholds}
\small
\begin{tabular}{@{}rccl@{}}
\toprule
Threshold $\tau$ & Using $P_0$ only & Envelope vs.\ $\tau$ & Conclusion over $\mathcal{C}_0$ \\
\midrule
$\pounds 50$ & worthwhile & entirely above & robust: worthwhile \\
$\pounds 250$ & worthwhile & straddles $\tau$ & assumption-dependent \\
$\pounds 800$ & not worthwhile & entirely below & robust: not worthwhile \\
\bottomrule
\end{tabular}
\end{table}

\paragraph{The EVPI lower endpoint is computed exactly from the generators}
Because $\mathcal{C}_0$ is finitely generated,
Theorem~\ref{thm:main}(b) applies, so the lower envelope endpoint equals the smallest
EVPI among the 12 generators, with no optimization over the interior. Evaluating
conventional EVPI at each generator and taking the minimum returns the
$\pounds 84$ endpoint of Table~\ref{tab:chemo-evpi} directly, illustrating the
exact finite computation.

\paragraph{The EVPI upper endpoint and the rule-specific value are distinct}
Table~\ref{tab:chemo-evpi} illustrates the main structural results. The EVPI
lower endpoint is the smallest generator value, as Theorem~\ref{thm:main}(b)
predicts. The upper endpoint is different. The largest single-generator EVPI is
$\pounds 349$, but the LP of Remark~\ref{rem:upper-lp} gives $\pounds 514$ at the
interior mixture $0.37G_{\mathrm A}+0.63G_{\mathrm B}$. Thus vertex enumeration is
exact for the lower endpoint but understates the upper endpoint by about a third.
The lower-expectation EVPI over $\mathcal C_0$ is another object again,
approximately $\pounds 189$. This is the gain in guaranteed decision value, not an
envelope endpoint. In this application it lies inside the fixed-measure envelope,
but Theorem~\ref{thm:main}(c) permits it to exceed the whole envelope. The minimal
instance in Table~\ref{box:overshoot} makes that separation explicit.

\begin{table}[H]
\centering
\caption{A minimal instance of the $\Gamma$-maximin overshoot
(Theorem~\ref{thm:main}(c)). Two strategies and two states with mirror-image net
benefits, and $\Pset$ the full set of measures on the two states, written
$x=P(\text{state }1)$. The rule-specific ($\Gamma$-maximin) EVPI equals $2$ and
lies strictly above the entire fixed-measure envelope $[0,1]$, because the
worst-case perfect-information value and the worst-case current value are attained
at different measures. The rule-specific value is therefore not any single-measure
EVPI. This inset uses the deterministic named-strategy convention. Randomized
mixtures are not included as actions, and admitting them would raise the maximin
current value and remove the strict overshoot in this two-state example
(Remark~\ref{rem:overshoot}).}
\label{box:overshoot}
\small
\begin{tabular}{@{}lcc@{}}
\toprule
 & State $1$ & State $2$ \\
\midrule
Net benefit of strategy $0$ & $+1$ & $-1$ \\
Net benefit of strategy $1$ & $-1$ & $+1$ \\
\midrule
Conventional EVPI$(x)=1-|2x-1|$ & \multicolumn{2}{c}{max $=1$ at $x=\tfrac12$} \\
Fixed-measure envelope $[\underline{\EVPI},\overline{\EVPI}]$ & \multicolumn{2}{c}{$[0,\,1]$} \\
Perfect-information value $V_\Theta^{\mathrm{LE}}=\inf_x\E_x[M]$ & \multicolumn{2}{c}{$1$} \\
Current value $V_0^{\mathrm{LE}}=\max_a\inf_x\E_x[\NB_a]$ & \multicolumn{2}{c}{$-1$} \\
Rule-specific $\EVPI_{\Pset}^{\mathrm{LE}}=V_\Theta^{\mathrm{LE}}-V_0^{\mathrm{LE}}$ & \multicolumn{2}{c}{$2\;(>1)$} \\
\bottomrule
\end{tabular}
\end{table}

\subsection{EVSI, the research-commissioning decision, and vertex enumeration}
\label{sec:case-evsi}

We evaluated a future two-arm trial that updates the treatment-effect parameter
$\ell$ under a normal--normal approximation. Under the reference measure, EVSI rose
from about $\pounds 97$ at $n=20$ per arm, to $\pounds 243$ at the $150$-per-arm
benchmark, to $\pounds 320$ at $n=2000$, approaching the treatment-effect
EVPPI/EVPI ceiling of about $\pounds 329$. Because the trial informs only $\ell$,
the nested-information ordering $\EVSI\le\EVPPI(\ell)\le\EVPI$ applies. Across the
12 generators, EVSI at $n=2000$ ranged from about $\pounds 78$ to $\pounds 339$.
By Proposition~\ref{prop:evppi-vertex}, this generator range is an inner
approximation to the EVSI envelope over the convex hull.

\paragraph{Vertex enumeration fails for the partial-information envelope}
The four-point construction of Proposition~\ref{prop:evppi-vertex} gives a minimal
two-action example in which the EVPPI is zero at both generator endpoints but has a
strict interior maximum of about $0.36$ net-benefit units. Vertex enumeration would
therefore miss the upper endpoint entirely. The chemotherapy correlation grid plays
a different role. The treatment-effect EVPPI varies only modestly across the
nine-point grid, from about $\pounds 326$ to $\pounds 333$, so it illustrates that
grid-based searches are inner approximations that should be refined when threshold
classifications are close, rather than itself exhibiting an interior maximum.

\subsection{Validation of the estimators}
\label{sec:validation}

Because IP-VOI reuses single-measure VOI estimators inside an outer search, the
framework is only as trustworthy as those estimators. We therefore ran a small
validation study. First, for a two-strategy model with a normally distributed
net-benefit difference, EVPI has a closed form. Across a grid of mean advantages,
the Monte Carlo EVPI matched the closed form to within Monte Carlo error
(maximum absolute discrepancy below $\pounds 1$ per patient across the grid, Figure~\ref{fig:validation}a). Second, the EVSI estimator reproduced its
theoretical limit on the reference measure, rising monotonically with the planned
sample size from about $\pounds 97$ at a small trial ($n=20$) toward the treatment-effect EVPPI ceiling of
about $\pounds 329$ as the trial grew large (Figure~\ref{fig:validation}b).
Third, across all 12 zero-correlation generators of $\mathcal{C}_0$ the estimates
respected the expected ordering $\EVSI\le\EVPPI\le\EVPI$ for this nested
information structure, within Monte Carlo error.
These checks indicate that the envelope widths reported above reflect imprecision
across the admissible set rather than estimator artifacts.

\begin{figure}[H]
\centering
\includegraphics[width=0.92\linewidth]{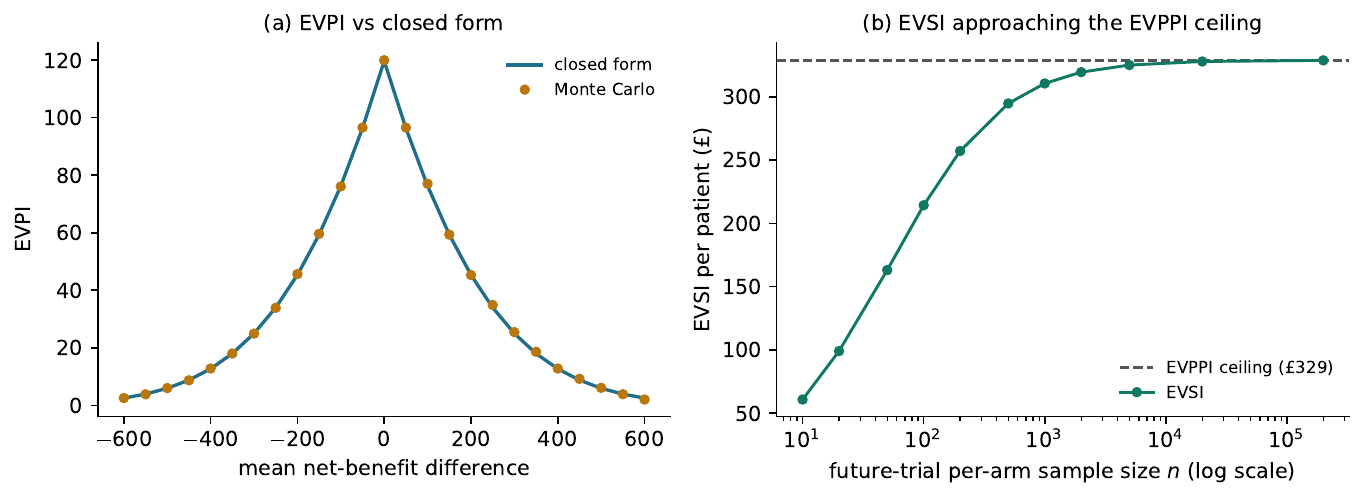}
\caption{Validation of the value-of-information estimators against known limits in
these examples. (a)~Monte Carlo EVPI (points) against the closed-form value (line)
for a two-strategy normal model. (b)~EVSI per patient for the reference measure as a
function of the future-trial per-arm sample size $n$ (log scale), approaching the
treatment-effect EVPPI ceiling (dashed) from below. This checks the regression-based
estimator in the present calibration rather than establishing a general guarantee.
EVPI is the expected value of
perfect information, EVPPI the expected value of partial perfect information, and
EVSI the expected value of sample information.}
\label{fig:validation}
\end{figure}

\section{Discussion}
\label{sec:discussion}

\subsection{What changes when the measure becomes a set}
Moving from a single measure to a credal set changes the estimand, not merely
the computation. Distributional and expectation bounds do not by themselves rank
actions, so a decision criterion for imprecision must be supplied before a
rule-specific VOI number exists. The framework keeps three objects distinct. The
rule-specific VOI values information under a stated decision criterion. The
fixed-measure envelope is a sensitivity diagnostic for how strongly a classical
VOI result depends on the selected precise measure. Probability bounds analysis
is a constructive route for specifying and propagating the credal set, not a
decision rule.

Two structural facts are practically important. First, rule-specific VOI under
the lower-expectation criterion is nonnegative whenever the policy class contains
constant policies, because uninformative observations can be ignored. Without
that fallback, negative values can occur in models of sequential choice under
imprecision \citep{bradley2016free}. Second, EVPI has a special structure because
the pointwise optimal act after perfect information is the same function of
\(\theta\) for every admissible measure. This gives the concavity and endpoint
results above. Partial and sample information do not share this simplification:
the relevant post-information policies depend on the signal and generally cannot
be reduced to the EVPI endpoint calculations.

\paragraph{Relation to the value-of-evidence literature}
The sign of rule-specific VOI connects directly to a known tension for the
imprecise probabilist. \citet{bradley2016free} show that, under a
\(\Gamma\)-maximin attitude with a generalized conditionalization rule, a
sequential agent can be rationally required to pay to avoid cost-free evidence.
Free evidence can be bad. E-admissibility and maximality are more hospitable to
the principle that evidence is never bad
\citep{seidenfeld2004contrast,troffaes2007decision}. Our results are consistent
with this picture in the single-node setting. When the policy class contains the
constant policies, so that the agent may ignore the signal and re-optimize,
Theorem~\ref{thm:sign}(a) guarantees nonnegative lower-expectation VOI. This is
the benign regime. When constant policies are excluded, representing a commitment
to act on the signal, Theorem~\ref{thm:sign}(b) gives a finite credal set and
payoff structure for which the lower-expectation VOI is strictly negative. The
result is a static analogue of the Bradley--Steele phenomenon. The framework
therefore identifies the absence of an ignore-the-signal fallback as one mechanism
by which free information can be harmful under \(\Gamma\)-maximin, while the
research-prioritization use of VOI normally falls in the benign regime because
the analyst is free not to act on inconclusive evidence.

\subsection{Relationship to credal representations and decision rules}
The development is stated for a generic credal set and therefore specializes to
standard representations. Probability bounds analysis is natural when the
evidence is expressed as ranges, support bounds, moments, quantiles, CDF bounds,
or uncertain dependence. The same formulation admits coherent lower previsions,
credal sets from linear constraints, robust Bayesian classes of priors, interval
probabilities, Dempster--Shafer structures, sets of copulas, and moment-based
ambiguity sets from distributionally robust analysis
\citep{walley1991,augustin2014introduction,troffaes2007decision,weichselberger2000interval,berger1985statistical}.
The lower-expectation criterion used in the worked development is the
\(\Gamma\)-maximin functional of a coherent lower prevision. Replacing the
criterion changes the aggregation across measures but not the simulation
substrate. Scalar criteria such as minimax regret and weighted lower--upper rules
can therefore be obtained by substituting the corresponding criterion
\(C_\rho\). Set-valued criteria such as maximality, interval dominance, and
E-admissibility can be evaluated using the same stored expectations, but require
an additional selection or reporting convention before they define a scalar VOI.

The envelope, by contrast, is criterion-free. It reports the image of the credal
set under the classical VOI functional and is the right object when the question
is robustness of a precise-measure conclusion rather than choice under
imprecision. The framework should not be read as ordinary scenario analysis. The
target is the lower and upper values over the evidence-defined set. A finite grid
or finite list may approximate those endpoints, or may define the generators of a
convex credal set, but the endpoints carry a decision-theoretic interpretation
rather than being illustrative cases.

\subsection{Computation and limitations}
\label{sec:practical}
The computational cost is the product of estimating VOI under one represented
measure or generator and searching over \(\Pset\). EVPI is cheapest because its
endpoint structure is explicit. EVPPI is more demanding because conditional
values must be estimated for each represented measure or generator. EVSI is the
most expensive because of its preposterior calculation. A naive outer search can
therefore be slow and should be staged, with a coarse exploration to locate
candidate endpoint regions followed by refinement in those regions. Common random
numbers, reweighting, cached evaluations, surrogate models for the outer surface
\(h\mapsto T(P_h)\), and parallel computation reduce cost, and p-box
discretizations can be refined only where endpoints are sensitive.

Several limitations remain. The analyst must specify \(\Pset\). This is not
unique to the approach, since precise VOI also rests on assumptions, but the
credal-set analysis makes those assumptions explicit. The result is only as good
as a set broad enough to contain the defensible measures yet narrow enough to
exclude implausible ones. Computation is costly, especially for EVSI. Intervals
are less convenient than single numbers, although an interval is the honest
summary when the evidence does not identify one measure. The EVSI calculations
in the worked example are finite Monte Carlo regression approximations to
single-measure targets; the exact structural statements concern the EVPI
functionals and the credal-set definitions. Finally, the framework does not
replace model averaging or hierarchical modelling
\citep{price2011model,strong2012discrepancy,strong2014modelimprovement,koffijberg2018choices}.
When credible weights over structures can be defended, a model-averaged measure
is appropriate and is analysed by classical VOI. The credal-set treatment is for
the complementary case in which adding weights would answer a different question
and conceal the imprecision.

\subsection{Conclusion}
Classical VOI reports the value of information after a single measure has been
selected. Treating VOI as a functional on a credal set reports how that value
behaves over all measures compatible with the evidence and, under a stated
criterion for imprecision, how much information is worth to a decision maker. The
distinction matters whenever distributional form, dependence, model structure, or
expert judgement is not precisely identified. Reporting both quantities separates
conclusions that are robust across the credal set from conclusions that depend on
an unidentified measure.

\section*{Declaration of interests}
The author is employed by Medtronic Trading S\`arl. The views expressed are the
author's own. The author declares no other competing interests.

\section*{Funding}
No specific funding was received for this work.

\section*{Data availability}
The analysis code and derived data are included with this submission, in both
Python (the reference implementation) and R, and the reported values are
reproducible from a deterministic script. A tagged public release will be archived
on acceptance. The supplementary material gives the full estimation algorithm,
proofs of the estimator relations, and the explicit instances underlying the
structural results.

\section*{Acknowledgements}
The author thanks colleagues who provided feedback on earlier versions of this
work.

\appendix
\section{Proofs and supporting results}
\label{app:proofs}

\begin{proposition}[Expectation bounds for the full p-box band]
\label{prop:pbox-expectation}
Let $X$ be a bounded scalar output with support contained in $[\ell_X,u_X]$, and let $\underline F_X\le\overline F_X$ be valid cumulative distribution functions on $[\ell_X,u_X]$. Let $\mathcal B(\underline F_X,\overline F_X)$ be the full p-box band of all cumulative distribution functions $F$ supported on $[\ell_X,u_X]$ satisfying $\underline F_X(x)\le F(x)\le \overline F_X(x)$ for all $x$. Then the lower and upper expectations over this full p-box band are given by Equation~\eqref{eq:pbox-expectation}. If $\{F_{X,P}:P\in\Pset\}\subseteq\mathcal B(\underline F_X,\overline F_X)$, the inequalities in Equation~\eqref{eq:pbox-expectation-outer} follow.
\end{proposition}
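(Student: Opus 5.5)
The plan is to reduce both expectation formulas to a pointwise comparison of distribution functions, and to get the outer inequalities by monotonicity of the infimum and supremum under set inclusion.

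\textbf{Layer-cake step.} For any CDF $F$ supported on $[\ell_X,u_X]$, integration by parts gives
\[
 \E_F(X)=\ell_X+\int_{\ell_X}^{u_X}\{1-F(x)\}\,dx .
\]
Equivalently, apply Tonelli to $X-\ell_X=\int_{\ell_X}^{u_X}\mathbf 1\{X>x\}\,dx$. Boundedness makes every integral finite, so no integrability issues arise.

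\textbf{Lower expectation.} Take any $F\in\mathcal B(\underline F_X,\overline F_X)$. Since $F\le\overline F_X$ pointwise, $1-F\ge 1-\overline F_X$, and integrating gives $\E_F(X)\ge \ell_X+\int\{1-\overline F_X\}\,dx$. By hypothesis $\overline F_X$ is itself a valid CDF on $[\ell_X,u_X]$, and it lies trivially in the band. So the bound is attained and the infimum is a minimum equal to the stated $\pboxLowerE(X)$.

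\textbf{Upper expectation.} The argument is symmetric. From $F\ge\underline F_X$ we get $1-F\le 1-\underline F_X$, so $\E_F(X)\le \ell_X+\int\{1-\underline F_X\}\,dx$. Equality holds at $F=\underline F_X$, which is also a member of the band.

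\textbf{Outer inequalities.} Suppose $\{F_{X,P}:P\in\Pset\}\subseteq\mathcal B$. Each $\E_P(X)$ depends on $P$ only through the output law $F_{X,P}$, via the layer-cake formula. Therefore
\[
 \underline{\E}_{\Pset}(X)=\inf_{P}\E_{F_{X,P}}(X)\ \ge\ \inf_{F\in\mathcal B}\E_F(X)=\pboxLowerE(X),
\]
and the analogous inequality holds for the supremum. The middle inequality $\underline{\E}_{\Pset}\le\overline{\E}_{\Pset}$ holds because $\Pset$ is nonempty.

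\textbf{Main obstacle.} The only delicate point is attainment, that is, showing the bounds are sharp rather than mere inequalities. This rests on the hypothesis that the bounding functions are genuine CDFs supported on $[\ell_X,u_X]$: right-continuous, with value $0$ below $\ell_X$ and $1$ at $u_X$. If this hypothesis were dropped, one would first replace each bound by its right-continuous, support-truncated regularization. That replacement changes them only on a Lebesgue-null set or outside the support, so the integrals are unchanged. I would note this briefly but rely on the stated assumption.
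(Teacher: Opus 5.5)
Your proof is correct and is essentially the paper's argument: the identity $\E_F(X)=\ell_X+\int_{\ell_X}^{u_X}\{1-F(x)\}\,dx$, pointwise monotonicity in $F$ with attainment at $\overline F_X$ and $\underline F_X$ (both of which lie in the band), and monotonicity of $\inf$/$\sup$ under the inclusion $\{F_{X,P}\}\subseteq\mathcal B$ for the outer inequalities. One caution on your closing aside, which is outside the stated hypotheses and so does not affect the proof: the right-continuous regularization of an upper envelope that is not a CDF can exceed it at jump points (e.g.\ $\sup_n\mathbf 1\{x\ge 1/n\}=\mathbf 1\{x>0\}$ regularizes to $\mathbf 1\{x\ge 0\}$), so the regularized bound need not lie in the original band and attainment can fail; this is why the paper reads the integrals in that case only as outer bounds rather than attained endpoints.
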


\begin{proof}
For any cumulative distribution function $F$ supported on $[\ell_X,u_X]$,
\[
 \E_F(X)=\ell_X+\int_{\ell_X}^{u_X} \{1-F(x)\}\,dx.
\]
The expectation is decreasing in $F$ pointwise. Therefore the smallest expectation over the full band is obtained by using the largest admissible cumulative distribution function in the band, $\overline F_X$, and the largest expectation is obtained by using the smallest one, $\underline F_X$. This gives Equation~\eqref{eq:pbox-expectation}. If the output distributions induced by $\Pset$ form a proper subset of the band, optimizing over the larger band can only decrease the lower endpoint and increase the upper endpoint, which gives Equation~\eqref{eq:pbox-expectation-outer}. If instead $\underline F_X$ or $\overline F_X$ is only an envelope and is not itself an admissible cumulative distribution function, the integrals in Equation~\eqref{eq:pbox-expectation} are read as outer bounds rather than attained endpoints.
\end{proof}

\begin{proposition}[Lower-expectation EVPI]
\label{prop:robust-evpi}
For a finite strategy set $\A$, the lower-expectation EVPI in Equation~\eqref{eq:lower-evpi} is nonnegative.
\end{proposition}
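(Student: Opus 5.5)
The plan is to compare the two terms of Equation~\eqref{eq:lower-evpi} directly, action by action, using only the pointwise inequality $\NB_a(\theta)\le M(\theta)$ with $M(\theta)=\max_{b\in\A}\NB_b(\theta)$, valid for every $a\in\A$ and $\theta\in\Theta$. This inequality is the same fact that makes $D_a=M-\NB_a\ge0$ in the proof of Theorem~\ref{thm:main}(a). Monotonicity of expectation then gives $\E_P[\NB_a]\le\E_P[M]$ for every $P\in\Pset$ and every $a$, provided both expectations are finite. Finiteness is guaranteed by the standing finiteness assumption of Section~\ref{sec:standard-voi}, or by the boundedness $|\NB_a|\le B$ of Section~\ref{sec:theory}.

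First I fix an action $a$ and take infima over $P\in\Pset$ on both sides. Infimum is monotone, so
\[
\inf_{P\in\Pset}\E_P[\NB_a]\le\inf_{P\in\Pset}\E_P[M]=V_\Theta^{\mathrm{LE}}(\Pset).
\]
The right-hand side does not depend on $a$. Because $\A$ is finite, the maximum over $a$ on the left is attained and is still bounded by the right-hand side. This gives $V_0^{\mathrm{LE}}(\Pset)\le V_\Theta^{\mathrm{LE}}(\Pset)$, and subtracting yields $\EVPI_{\Pset}^{\mathrm{LE}}\ge0$.

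As an alternative route, I would note that the result also follows from Theorem~\ref{thm:sign}(a) with $Z=\theta$. The unrestricted policy class contains the constant policies. The unrestricted supremum equals $V_\Theta^{\mathrm{LE}}$ by the argument given after Equation~\eqref{eq:lower-perfect}: the pointwise-optimal policy attains $M$ for every $P$ simultaneously, and no policy can exceed $M$ pointwise. I would still present the direct argument, since it is self-contained.

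There is no real obstacle here. The only point needing care is that the infima are finite, so that the difference in Equation~\eqref{eq:lower-evpi} is well defined rather than of the form $\infty-\infty$. This follows from the boundedness of the net benefits. Compactness of $\Pset$ is not needed, because the argument works with infima rather than minima.
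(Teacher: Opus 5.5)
Your proof is correct and follows the paper's own argument: the pointwise inequality $M(\theta)\ge\NB_a(\theta)$, monotonicity of expectation under each $P\in\Pset$, infima over $P$, then the maximum over the finite set $\A$. Your note that boundedness keeps the difference finite, and your alternative route via Theorem~\ref{thm:sign}(a) with $Z=\theta$, are both sound additions but not needed.
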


\begin{proof}
Recall that $M(\theta)=\max_{a\in\A}\NB_a(\theta)$. For every strategy $a$ and every $\theta$, $M(\theta)\ge \NB_a(\theta)$. Monotonicity of expectation implies $\E_P M(\theta)\ge \E_P\{\NB_a(\theta)\}$ for every $P\in\Pset$. Taking infima over $P$ gives
\[
 \underline{\E}_{\Pset}M(\theta)
 \ge
 \underline{\E}_{\Pset}\{\NB_a(\theta)\}
\]
for every $a$. Taking the maximum over $a$ on the right gives
\[
 \underline{\E}_{\Pset}M(\theta)
 \ge
 \max_{a\in\A}\underline{\E}_{\Pset}\{\NB_a(\theta)\}.
\]
The difference between the two sides is Equation~\eqref{eq:lower-evpi}, so lower-expectation EVPI is nonnegative.
\end{proof}

\begin{proposition}[Sharp fixed-measure VOI envelope]
\label{prop:sharp}
Let $T(P)$ be a conventional fixed-measure VOI functional, such as EVPI, EVPPI for a prespecified parameter group, or EVSI for a prespecified design. The set
\[
 T_{\mathrm{env}}(\Pset)=\{T(P):P\in\Pset\}
\]
is the sharp set of fixed-measure VOI values left possible by the evidence. If $\Pset$ is compact and $T$ is continuous on $\Pset$, then the lower and upper endpoints are attained. If, in addition, $\Pset$ is connected, then $T_{\mathrm{env}}(\Pset)$ is an interval in $\mathbb R$.
\end{proposition}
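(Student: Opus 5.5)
The proof splits into three claims, and I will take them in order. First is sharpness. Second is attainment of the endpoints. Third is the interval property.

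\emph{Sharpness.} This is essentially definitional. By construction, every $P\in\Pset$ is admissible, meaning compatible with the evidence. Each value $T(P)$ is therefore a fixed-measure VOI value that the evidence cannot exclude. Conversely, any value outside $T_{\mathrm{env}}(\Pset)$ is not $T(P)$ for any admissible $P$, so the evidence does rule it out. Hence $T_{\mathrm{env}}(\Pset)$ is exactly the set of values left possible: it is neither an inner nor an outer approximation. As a result, $\underline T$ and $\overline T$ are the tightest bounds valid over all of $\Pset$. I will phrase this as the statement that any interval $[c_1,c_2]$ containing $T(P)$ for all $P\in\Pset$ must satisfy $c_1\le\underline T$ and $c_2\ge\overline T$.

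\emph{Attainment.} I will use the standing topology on the space of signed measures, under which $\Pset$ is compact, and the assumed continuity of $T$ on $\Pset$. Then $T(\Pset)$ is a compact subset of $\mathbb R$, because a continuous image of a compact set is compact. A compact subset of $\mathbb R$ is closed and bounded, so it contains its infimum and supremum. This is the Weierstrass extreme value theorem in a general topological space. It gives measures $\underline P,\overline P\in\Pset$ with $T(\underline P)=\underline T$ and $T(\overline P)=\overline T$.

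\emph{Interval property.} If $\Pset$ is also connected, then $T(\Pset)$ is connected, since continuous images of connected sets are connected. The connected subsets of $\mathbb R$ are exactly the intervals. Combined with attainment, this gives $T_{\mathrm{env}}(\Pset)=[\underline T,\overline T]$. In the paper's main setting $\Pset$ is convex. Convex sets are path-connected, because the segment $t\mapsto(1-t)P+tQ$ is continuous in any topological vector space topology. So connectedness holds automatically there, and I will remark on this.

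The only real obstacle is making sure that continuity of $T$ is meaningful in the chosen topology. For EVPI, this follows from the standing assumption that $P\mapsto\E_P[\NB_a]$ and $P\mapsto\E_P[M]$ are continuous. Writing $\EVPI=\min_a\E_P[D_a]$ as in Equation~\eqref{eq:min-affine} then shows that EVPI is continuous, and Proposition~\ref{prop:sharp-tv} gives the same conclusion in total variation. For EVPPI and EVSI, conditioning can break continuity, so I will keep continuity as an explicit hypothesis rather than try to prove it. I will also note the finite-signal case, where the value is a maximum of finitely many continuous maps and is therefore continuous.
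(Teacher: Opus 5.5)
Your proposal is correct and follows the paper's proof essentially step for step: sharpness is the definitional observation that each $T(P)$ with $P\in\Pset$ is admissible and no other value can arise, attainment is the extreme value theorem on compact $\Pset$, and the interval property comes from continuous images of connected sets being connected. Your extra remarks are sound additions the paper does not spell out: convexity of $\Pset$ gives path-connectedness, and continuity of $\EVPI$ follows from the min-of-affine representation.
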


\begin{proof}
For every $P\in\Pset$, the measure $P$ is admissible, and therefore $T(P)$ is a conventional VOI value that is compatible with the evidence. Conversely, if a number $v$ is not equal to $T(P)$ for any $P\in\Pset$, then no admissible measure can produce $v$ through the conventional VOI calculation. Thus $T_{\mathrm{env}}(\Pset)$ is exactly the set of fixed-measure VOI values left possible by the credal set.

If $\Pset$ is compact and $T$ is continuous, the extreme value theorem implies that $T$ attains its minimum and maximum on $\Pset$. If $\Pset$ is connected, the continuous image of a connected set in $\mathbb R$ is connected. The connected subsets of $\mathbb R$ are intervals. Therefore $T_{\mathrm{env}}(\Pset)$ is an interval. When $\Pset$ is not connected, the reported envelope is the smallest closed interval containing $T_{\mathrm{env}}(\Pset)$.
\end{proof}

The fixed-measure EVPI envelope is also continuous in total variation. By
Proposition~\ref{prop:sharp-tv}, under bounded net benefits
($|\NB_a(\theta)|\le B$ for all $a,\theta$) one has
$|\EVPI(P)-\EVPI(Q)|\le L\,\|P-Q\|_{\mathrm{TV}}$ with
$L\le\operatorname{osc}(M)+\max_a\operatorname{osc}(\NB_a)\le 4B$, so small changes
in the probability measure cannot produce arbitrarily large changes in
conventional EVPI.

\end{document}


\maketitle

This supplement provides four things. First, the full unified IP-VOI estimation
algorithm in pseudocode. Second, proofs of the estimator relations stated in the
main text, including the ordering $\EVSI\le\EVPPI\le\EVPI$ and the convergence of
the sample-information estimator to the partial-perfect-information ceiling. Third,
the variance-reduction argument for the common-random-number (CRN) estimator under
a normal--normal update. Fourth, the explicit four-point instance in which vertex
enumeration understates the partial-information envelope. Equation and
section numbers prefixed ``main'' refer to the main manuscript. Throughout,
exactness claims refer to population estimands unless a finite Monte Carlo or
regression approximation is explicitly being discussed. An executable reference
implementation accompanying this supplement reproduces every numerical value
reported in the worked application.

\tableofcontents

\section{Notation recalled}

Let $\A$ be a finite strategy set, $\theta$ the uncertain inputs, and
$\NB_a(\theta)$ the net benefit of strategy $a$, with $|\NB_a(\theta)|\le B$.
Write $M(\theta)=\max_{a\in\A}\NB_a(\theta)$. The credal set $\Pset$ is
represented computationally by a finite or discretized index set
$\Hset=\{h_1,\ldots,h_R\}$, where $h_m$ indexes an admissible measure or generator $P_{h_m}$.
For each $h_m$ we draw $\theta_m^{(i)}\sim P_{h_m}$, $i=1,\ldots,N$, and store
\[
 S_{m,i,a}=\NB_a\!\left(\theta_m^{(i)}\right),\qquad
 \widehat\mu_a(h_m)=\frac1N\sum_{i=1}^N S_{m,i,a}.
\]
Under the lower-expectation ($\Gamma$-maximin) rule, the current value, the
perfect-information value, and the partial/sample-information value are estimated
by
\begin{align}
 \widehat V_0^{\mathrm{LE}} &= \max_{a\in\A}\ \min_{1\le m\le R}\ \widehat\mu_a(h_m), \label{eq:s-hat-current}\\
 \widehat V_\Theta^{\mathrm{LE}} &= \min_{1\le m\le R}\ \frac1N\sum_{i=1}^N \max_{a\in\A} S_{m,i,a}, \label{eq:s-hat-perfect}\\
 \widehat V_Z^{\mathrm{LE}} &= \max_{\delta\in\Delta_Z}\ \min_{1\le m\le R}\ \frac1N\sum_{i=1}^N S_{m,i,\delta(z_m^{(i)})}, \label{eq:s-hat-policy}
\end{align}
with $\widehat{\EVPI}^{\mathrm{LE}}=\widehat V_\Theta^{\mathrm{LE}}-\widehat V_0^{\mathrm{LE}}$
and $\widehat{\VOI}_Z^{\mathrm{LE}}=\widehat V_Z^{\mathrm{LE}}-\widehat V_0^{\mathrm{LE}}$.
These are the empirical counterparts of Equations~(11), (12), and~(9) of the main
text.

\section{The unified algorithm}

\begin{algorithm}[h]
\caption{Unified IP-VOI computation (lower-expectation rule, with other rules
substituting the inner aggregation).}
\label{alg:s-unified}
\begin{algorithmic}[1]
\State \textbf{Admissible set.} Build $\Hset=\{h_1,\ldots,h_R\}$ representing
$\Pset$, recording the reference index $m_0$ if used. For finite discrete
imprecision, $\Hset$ enumerates the Cartesian product of the modelling choices:
exact if those measures are themselves the admissible family, whereas if their
convex closure is the intended credal set then $\Hset$ lists the generators and
nonlinear envelope endpoints may require additional optimization.
For a continuous component, $\Hset$ discretizes it on a grid (inner approximation).
\State \textbf{Inner simulation.} For each $h_m$ and $i=1,\ldots,N$, draw
$\theta_m^{(i)}\sim P_{h_m}$ and store $S_{m,i,a}$ for all $a\in\A$. For EVPPI/EVSI
also store the observable $z_m^{(i)}=\phi(\theta_m^{(i)})$ or a simulated dataset
$z_m^{(i)}=Y_d^{(i)}$.
\State \textbf{Current value.} Compute $\widehat\mu_a(h_m)$ and
$\widehat V_0^{\mathrm{LE}}$ via \eqref{eq:s-hat-current}.
\State \textbf{Estimand.}
\Statex \quad EVPI: compute $\widehat V_\Theta^{\mathrm{LE}}$ via
\eqref{eq:s-hat-perfect} and set $\widehat{\EVPI}^{\mathrm{LE}}
=\widehat V_\Theta^{\mathrm{LE}}-\widehat V_0^{\mathrm{LE}}$.
\Statex \quad EVPPI/EVSI: solve the finite policy search \eqref{eq:s-hat-policy}
(over the full policy class, or a provably safe reduced class for the model at hand) and set
$\widehat{\VOI}_Z^{\mathrm{LE}}=\widehat V_Z^{\mathrm{LE}}-\widehat V_0^{\mathrm{LE}}$.
For EVSI subtract the study cost.
\State \textbf{Envelope.} For the fixed-measure envelope, evaluate any validated
single-measure estimator $\widehat T(\cdot)$ at each $h_m$ and report
$[\min_m \widehat T(h_m),\ \max_m \widehat T(h_m)]$ with the reference value
$\widehat T(h_{m_0})$. This is exact when $\Hset$ enumerates the admissible
measures. When $\Hset$ lists the generators of a convex credal set, it is exact
for the EVPI \emph{lower} endpoint (Theorem~1(b) of the main text). The EVPI
\emph{upper} endpoint is obtained from the linear program of Remark~2, and
EVPPI/EVSI generator ranges are inner approximations unless the condition of
Proposition~2(b) holds (and then only for the upper endpoint).
\State \textbf{Stability report.} Report the reference value, the endpoint values
and the assumptions producing them, the threshold classification, and the Monte
Carlo error at candidate endpoints.
\end{algorithmic}
\end{algorithm}

\section{Proofs of the estimator relations}

Throughout this section fix a single measure $P$. The relations below hold under
each $h_m$ and therefore propagate to the fixed-measure envelope endpoints. They
do not, however, transfer to the rule-specific lower-expectation quantities by an
outer $\min$/$\max$: the $\Gamma$-maximin VOI need not equal the infimum over $P$
of a single-measure VOI and can exceed the entire envelope, as shown in the main
text. An ordering for the rule-specific values, where needed, follows instead from
nested policy and information classes under the relevant garbling assumptions. Write
$\mu_a=\E_P[\NB_a]$, and recall the population quantities
$\EVPI(P)=\E_P[M]-\max_a\mu_a$,
$\EVPPI(P;\phi)=\E_P[\max_a\E_P(\NB_a\mid\phi)]-\max_a\mu_a$, and
$\EVSI(P;d)=\E_P[\max_a\E_P(\NB_a\mid Y_d)]-\max_a\mu_a$.

\subsection{\texorpdfstring{The ordering $0\le\EVSI\le\EVPPI\le\EVPI$}{The ordering EVSI <= EVPPI <= EVPI}}

\begin{proposition}\label{prop:s-order}
Let the partial-perfect-information signal be a \emph{deterministic} function
$\phi=\phi(\theta)$ of the uncertain parameters, and let the future dataset $Y_d$
be conditionally independent of $\theta$ given $\phi$ (equivalently, $Y_d$ depends
on $\theta$ only through $\phi$, a garbling of $\phi$). Then
\[
 0\ \le\ \EVSI(P;d)\ \le\ \EVPPI(P;\phi)\ \le\ \EVPI(P).
\]
\end{proposition}

\begin{proof}
All three VOI quantities share the same current value $\max_a\mu_a$, so it
suffices to order the three post-information values
\[
\begin{aligned}
&V_0=\max_a\mu_a,\qquad
 V_{\text{EVSI}}=\E[\max_a\E(\NB_a\mid Y_d)],\\
&V_{\text{EVPPI}}=\E[\max_a\E(\NB_a\mid\phi)],\qquad
 V_{\text{EVPI}}=\E[\max_a\NB_a].
\end{aligned}
\]
We use repeatedly the following monotonicity fact. For any two
$\sigma$-algebras $\mathcal G_1\subseteq\mathcal G_2$ and any integrable random
vector $(W_a)_a$, Jensen's inequality applied to the convex map
$x\mapsto\max_a x_a$ together with the tower property gives
\begin{equation}\label{eq:s-mono}
 \E\!\big[\max_a\E(W_a\mid\mathcal G_1)\big]
 \ \le\ \E\!\big[\max_a\E(W_a\mid\mathcal G_2)\big],
\end{equation}
since $\max_a\E(W_a\mid\mathcal G_1)=\max_a\E[\E(W_a\mid\mathcal G_2)\mid\mathcal G_1]
\le\E[\max_a\E(W_a\mid\mathcal G_2)\mid\mathcal G_1]$ pointwise, and taking
expectations removes the outer conditioning.

\emph{Step 1 ($V_0\le V_{\text{EVSI}}$).} Apply \eqref{eq:s-mono} with
$W_a=\NB_a$, $\mathcal G_1=\{\emptyset,\Omega\}$ the trivial $\sigma$-algebra on
the underlying space $\Omega$ carrying both $\theta$ and $Y_d$, and
$\mathcal G_2=\sigma(Y_d)$.
The left-hand side is $\max_a\E(\NB_a)=V_0$ and the right-hand side is
$V_{\text{EVSI}}$. Nonnegativity of $\EVSI$ is exactly this inequality.

\emph{Step 2 ($V_{\text{EVSI}}\le V_{\text{EVPPI}}$).} This step is \emph{not} a
plain $\sigma$-algebra inclusion: a garbling $Y_d$ of $\phi$ is generated from
$\phi$ through an independent noise mechanism and is in general \emph{not}
$\sigma(\phi)$-measurable, so $\sigma(Y_d)\subseteq\sigma(\phi)$ fails. We use the
conditional-independence hypothesis instead. Because $\NB_a=\NB_a(\theta)$ and
$Y_d\perp\theta\mid\phi$, we have $\NB_a\perp Y_d\mid\phi$, and therefore
\[
 \E(\NB_a\mid Y_d)
 =\E\!\big[\E(\NB_a\mid\phi,Y_d)\,\big|\,Y_d\big]
 =\E\!\big[\E(\NB_a\mid\phi)\,\big|\,Y_d\big]
 =\E(\eta_a\mid Y_d),\qquad \eta_a:=\E(\NB_a\mid\phi).
\]
Hence, by Jensen for $x\mapsto\max_a x_a$ and the tower property,
\[
 V_{\text{EVSI}}=\E\big[\max_a\E(\eta_a\mid Y_d)\big]
 \le\E\big[\E(\max_a \eta_a\mid Y_d)\big]
 =\E[\max_a \eta_a]
 =\E[\max_a\E(\NB_a\mid\phi)]=V_{\text{EVPPI}}.
\]

\emph{Step 3 ($V_{\text{EVPPI}}\le V_{\text{EVPI}}$).} Because $\phi=\phi(\theta)$
is deterministic, $\sigma(\phi)\subseteq\sigma(\theta)$, and
$\E(\NB_a\mid\theta)=\NB_a$. Apply \eqref{eq:s-mono} with $W_a=\NB_a$,
$\mathcal G_1=\sigma(\phi)$ and $\mathcal G_2=\sigma(\theta)$: the left-hand side
is $V_{\text{EVPPI}}$ and the right-hand side is $\E[\max_a\NB_a]=V_{\text{EVPI}}$.

Combining the three steps gives
$V_0\le V_{\text{EVSI}}\le V_{\text{EVPPI}}\le V_{\text{EVPI}}$. Subtracting the
common $V_0$ proves the stated chain.
\end{proof}

\begin{lemma}[Consistency of the single-measure estimators, with the ordering as a numerical check]
\label{lem:s-emp-order}
Fix an admissible measure $P=h_m$. The EVPI plug-in $\widehat\EVPI(h_m)$ is
consistent for $\EVPI(P)$ as $N\to\infty$ (a difference of sample averages of
bounded functionals). The EVPPI and EVSI plug-ins
$\widehat\EVPPI(h_m),\widehat\EVSI(h_m)$ are consistent for the
conditional-expectation \emph{approximations} actually computed---a fixed-bin
estimate of $\E_P(\NB_a\mid\phi)$ for EVPPI and a regression/posterior-mean
estimate for EVSI---and hence for the corresponding approximate VOI targets.
Consistency for the exact population EVPPI/EVSI additionally requires a consistent
nonparametric conditional-mean estimator, a discretization whose mesh is refined
with $N$, or a correctly specified regression class. The exact population targets satisfy the
ordering $0\le\EVSI\le\EVPPI\le\EVPI$ of Proposition~\ref{prop:s-order}. For exact
population targets, a pointwise ordering $T_1(P)\le T_2(P)$ holding for every
admissible measure $P$ is inherited by the corresponding fixed-measure envelope
endpoints computed over the same set: when the $h_m$ enumerate the admissible
measures this is the finite $\min_m T(h_m)$/$\max_m T(h_m)$, whereas when the $h_m$
are generators of a convex credal set the ordering still holds over the hull, but
the endpoints must be computed by the appropriate structural or numerical
optimization (for the EVPI upper endpoint, the linear program of the main text,
whose plug-in is consistent by Proposition~\ref{prop:s-lp-consistency} below), not
by generator extrema alone. The
fixed-bin EVPPI and
regression/posterior-mean EVSI \emph{approximation} targets, however, need not
satisfy the exact ordering automatically unless the conditional-mean approximations
preserve the nested information structure. We therefore treat the ordering as a
numerical validation check on the estimates rather than a guaranteed property of
the approximations. This is a statement about the fixed-measure quantities only:
the rule-specific lower-expectation values are \emph{not} ordered by this argument,
since they are not an outer $\min$/$\max$ of a single-measure VOI (see the remarks
opening this section). The finite-sample plug-in $\E[\max_a\cdot]$ terms are
biased upward (Jensen), so a naive estimator can violate the ordering for small
$N$. The CRN construction of Section~\ref{sec:s-crn} reduces the variance of the
post-information/current-value difference by differencing against shared draws, but
it does not remove the maximization (Jensen) bias, which vanishes only as
$N\to\infty$. The validation study confirms that the chosen approximations recover
the known EVPI and EVPPI-ceiling limits in the application and that the estimates
satisfy the ordering within Monte Carlo error across all 12 generators.
\end{lemma}

\begin{proof}
The EVPI term is a difference of sample averages of bounded functionals of
i.i.d.\ draws, hence consistent by the strong law of large numbers, with the outer
$\min$/$\max$ over the finite index set $\Hset$ continuous. The EVPPI and EVSI
plug-ins replace the exact conditional expectation $\E_P(\NB_a\mid\cdot)$ by a
fixed-bin or regression estimate. For a fixed bin count or regression class these
converge as $N\to\infty$ to the corresponding approximate conditional means, and
hence to the approximate VOI targets, and to the exact targets only if the bin
mesh shrinks or the regression is correctly specified. The upward bias of
$\frac1N\sum_i\max_a(\cdot)$ relative to $\max_a\frac1N\sum_i(\cdot)$ is the
standard maximization bias and is $O(N^{-1/2})$ in the present bounded setting.
\end{proof}

\begin{proposition}[Consistency of the EVPI upper-endpoint linear program]\label{prop:s-lp-consistency}
Let $D_{k,a}=\E_{G_k}[M]-\E_{G_k}[\NB_a]$ for $k=1,\ldots,K$ and $a\in\A$, and let
\[
 U(D)=\max_{w\in\Delta_K}\ \min_{a\in\A}\ \sum_{k=1}^K w_k D_{k,a}
\]
be the EVPI upper endpoint over the finitely generated credal set
$\operatorname{conv}\{G_1,\ldots,G_K\}$ (the linear program of the main text). For
any array $\widehat D$ with
$\|\widehat D-D\|_\infty=\max_{k,a}|\widehat D_{k,a}-D_{k,a}|\le\varepsilon$,
\[
 \bigl|U(\widehat D)-U(D)\bigr|\le\varepsilon.
\]
Consequently, since each $\E_{G_k}[M]$ and $\E_{G_k}[\NB_a]$ is estimated
consistently by the Monte Carlo sample average of a bounded functional, the plug-in
endpoint $U(\widehat D)$ is consistent for $U(D)$, with Monte Carlo error
controlled by the worst per-generator, per-action error.
\end{proposition}

\begin{proof}
Fix $w\in\Delta_K$ and $a\in\A$. Since $w_k\ge0$ and $\sum_k w_k=1$,
\[
 \Bigl|\textstyle\sum_k w_k\widehat D_{k,a}-\sum_k w_k D_{k,a}\Bigr|
 \le\sum_k w_k\,|\widehat D_{k,a}-D_{k,a}|\le\varepsilon .
\]
Thus the affine maps $w\mapsto\sum_k w_k\widehat D_{k,a}$ and
$w\mapsto\sum_k w_k D_{k,a}$ differ by at most $\varepsilon$ pointwise in $a$. The
operations $\min_a$ and $\max_w$ are each $1$-Lipschitz for the supremum norm (a
minimum, or maximum, of functions that differ pointwise by at most $\varepsilon$
differ by at most $\varepsilon$), so applying $\min_a$ and then $\max_w$ preserves
the bound, giving $|U(\widehat D)-U(D)|\le\varepsilon$. Letting $\varepsilon\to0$
as $N\to\infty$ yields consistency.
\end{proof}

\subsection{Convergence of EVSI to the EVPPI ceiling}

\begin{proposition}\label{prop:s-evsi-limit}
Consider a future study that estimates the partial parameter (here the
log-odds-ratio $\ell$) through a statistic whose sampling variance
$\sigma^2(n)\to0$ as the planned sample size $n\to\infty$, and suppose the
conditional expected net benefits $\ell\mapsto\E_P(\NB_a\mid\ell)$ are
$P$-almost surely continuous (or more generally, the posterior expectations converge in
$L^1$ as the posterior collapses). Then
\[
 \lim_{n\to\infty}\EVSI(P;n)=\EVPPI(P;\ell).
\]
\end{proposition}

\begin{proof}
As $n\to\infty$ the sampling variance $\sigma^2(n)\to0$, so the future statistic
becomes perfectly informative about $\ell$ and the posterior law of $\ell$ given
the data concentrates at the realized value. Hence the data-conditional
expectation $\E_P(\NB_a\mid Y_d)$ converges to the $\ell$-conditional expectation
$\E_P(\NB_a\mid\ell)$ for each $a$, $P$-almost surely. By the assumed $P$-almost
sure continuity of $\ell\mapsto\E_P(\NB_a\mid\ell)$, the post-information
\emph{value} $\max_a\E_P(\NB_a\mid Y_d)$ converges to $\max_a\E_P(\NB_a\mid\ell)$,
the value attained by an action optimal given $\ell$. The identity of the optimal
action need not converge at ties, but its value does. Bounded net benefits
($|\NB_a|\le B$) give dominated convergence of the outer expectation, so
$\EVSI(P;n)\to\EVPPI(P;\ell)$. The normal--normal update used in the application is
the concrete instance: the posterior mean of $\ell$ given the trial statistic
$\hat y$ is $\E(\ell\mid\hat y)=\kappa(n)\,\hat y+(1-\kappa(n))\,\mu_0$ with
$\kappa(n)=\sigma_0^2/(\sigma_0^2+\sigma^2(n))\to1$ and prior mean and variance
$\mu_0,\sigma_0^2$, while the posterior variance
$\sigma_0^2\sigma^2(n)/(\sigma_0^2+\sigma^2(n))\to0$, so the posterior collapses to a point
mass at $\ell$. The reverse trivial bound
$\EVSI(P;n)\le\EVPPI(P;\ell)$ for all $n$ is Proposition~\ref{prop:s-order}. Hence
the limit is approached from below, consistent with Figure~2(b) of the main text.
\end{proof}

\section{The common-random-number estimator}
\label{sec:s-crn}

The EVSI estimator differences a post-data decision value against a current value.
A naive nested estimator draws fresh inner samples for each simulated dataset,
which is both upward-biased (Lemma~\ref{lem:s-emp-order}) and high-variance. We
instead reuse a single pool of draws. Reusing one prior-predictive pool to couple
the with- and without-information terms is the standard common-random-number
(control-variate) device for Monte Carlo variance reduction, and the single-pool
reuse it relies on is already built into regression-based VOI estimators such as
those of Strong et~al.\ and Heath et~al. We record only the property that makes
the coupling exact for this estimator.

\begin{lemma}[CRN coupling under data-independent posterior variance]\label{lem:s-crn}
Under the normal--normal update, the posterior variance of the partial parameter
does not depend on the realized data. Consequently, holding fixed a single pool of
draws of all other parameters and of the standardized partial-parameter
perturbation, and varying only the posterior mean across simulated datasets,
reproduces the analytic normal--normal preposterior law of the posterior mean on
which the plug-in decision rule acts, up to Monte Carlo error from the finite
simulation pool. The coupling is exact for that analytic posterior-mean
law. The resulting EVSI estimate nonetheless remains a regression-based
posterior-mean plug-in (the post-data action is taken at the fitted conditional
means evaluated at the posterior mean rather than under the full posterior), so it
is a finite Monte Carlo approximation rather than exact normal--normal EVSI.
\end{lemma}

\begin{proof}
Write the partial parameter as $\ell$ with prior $\ell\sim\mathcal N(\mu_0,\sigma_0^2)$
and trial likelihood $\hat y\mid\ell\sim\mathcal N(\ell,\sigma^2)$. The posterior is
$\ell\mid\hat y\sim\mathcal N\!\big(\kappa\hat y+(1-\kappa)\mu_0,\ \sigma_0^2\sigma^2/(\sigma_0^2+\sigma^2)\big)$
with $\kappa=\sigma_0^2/(\sigma_0^2+\sigma^2)$. The posterior variance
$v=\sigma_0^2\sigma^2/(\sigma_0^2+\sigma^2)$ is a function of $(\sigma_0^2,\sigma^2)$ only. It
does not depend on $\hat y$. The preposterior law of the posterior mean is induced
by the marginal $\hat y\sim\mathcal N(\mu_0,\sigma_0^2+\sigma^2)$, equivalently by
$\hat y=\ell+\sqrt{\sigma^2}\,\xi$ with $\xi\sim\mathcal N(0,1)$
independent of the pool. Holding the pool (the draws of $\ell$ and of all other
parameters) fixed and drawing only $\xi$ therefore generates the
preposterior distribution of the posterior-mean decision rule under the empirical
prior pool, since no other quantity entering the decision depends on $\hat y$
except through the posterior mean. As the pool size grows this converges to the
analytic normal--normal preposterior law. Differencing the post-data value and the current value against the same pool
cancels much of the shared simulation noise, leaving the signal-driven variation as
the main remaining source. This is a control-variate construction, exact in
expectation under the empirical-pool target, and it reduced the between-replicate
variance in the worked application.
\end{proof}

\noindent\emph{Empirical magnitude.} At the reference measure with $N=20000$ draws
and the $n=150$ per-arm design, the standard deviation of the EVSI estimate across
$40$ independent replications was about $\pounds 3.7$ under the common-random-number
coupling and about $\pounds 11$ when the current value was instead differenced
against an independent pool (both centred near $\pounds 240$), a roughly threefold
reduction. This check is reproducible via \texttt{crn\_variance\_check.py} in the
reproducibility folder.

\section{Explicit four-point instance: vertex enumeration understates the EVPPI
envelope}
\label{sec:s-vertex}

This section gives the explicit instance referenced in
Proposition~2(a) of the main text: a finitely parameterized family on four
states for which the EVPPI, as a function of a single mixing parameter, is zero at
both endpoints of its range but strictly positive in the interior, so vertex
(endpoint) enumeration returns zero and misses the maximum.

\paragraph{Construction.} Two actions, $a\in\{0,1\}$, an observed signal
$z\in\{0,1\}$ and an unobserved nuisance state $s\in\{0,1\}$, giving four states
$(z,s)\in\{(0,0),(0,1),(1,0),(1,1)\}$ in this order. Net benefits are
\[
 \NB_0=(0,0,0,0),\qquad \NB_1=(1,-2,3,-1).
\]
The signal marginal is fixed at $P(z=0)=\tfrac12$, and a single parameter
$t\in[0,1]$ controls the within-signal weighting:
$P(s=1\mid z=0)=t$ and $P(s=1\mid z=1)=t$, so the joint mass vector is
\[
 P(t)=\big(\tfrac12(1-t),\ \tfrac12 t,\ \tfrac12(1-t),\ \tfrac12 t\big).
\]

\paragraph{EVPPI as a function of $t$.} The current value is
$V_0(t)=\max\{\E_{P(t)}[\NB_0],\E_{P(t)}[\NB_1]\}=\max\{0,\ \E_{P(t)}[\NB_1]\}$,
and the post-signal value is
\[
 W(t)=\tfrac12\max\big\{\E(\NB_0\mid z{=}0),\E(\NB_1\mid z{=}0)\big\}
      +\tfrac12\max\big\{\E(\NB_0\mid z{=}1),\E(\NB_1\mid z{=}1)\big\},
\]
with $\E(\NB_a\mid z{=}0)=(1-t)\NB_a^{(0,0)}+t\,\NB_a^{(0,1)}$ and similarly for $z{=}1$.
Then $\EVPPI(t)=W(t)-V_0(t)$.

\paragraph{Values.} Direct evaluation gives $\EVPPI(0)=\EVPPI(1)=0$. The function
is piecewise linear in $t$: it is $0$ on $[0,\tfrac13]$, rises with slope
$\tfrac32$ on $[\tfrac13,\tfrac47]$, falls with slope $-2$ on $[\tfrac47,\tfrac34]$,
and is $0$ on $[\tfrac34,1]$. Its maximum is therefore attained at the interior
kink $t=\tfrac47$, with the exact value
\[
 \max_{t\in[0,1]}\EVPPI(t)=\EVPPI\!\big(\tfrac47\big)=\tfrac{5}{14}\approx0.357.
\]
(A coarse grid reports $\approx0.355$ near $t\approx0.57$, slightly understating the
exact peak.) Vertex enumeration, which evaluates only the endpoints
$t\in\{0,1\}$, returns
$0$ and therefore \emph{misses} the maximum entirely. This is the failure of
vertex attainment proved in general in Proposition~2(a) of the main text. The
post-information value is convex on the credal set regardless of the signal
marginal (a supremum over policies of affine functionals). What restores convexity
of $\VOI_Z$, and hence vertex attainment of its upper endpoint, is a single
pre-information optimal action common to all admissible measures, which makes the
current value affine (Proposition~2(b)). The lower endpoint may still require an
interior search. In the instance above this condition fails, because which action
is pre-information optimal switches with $t$.

\paragraph{Reproduction.} The reference implementation evaluates $\EVPPI(t)$ on a
grid and confirms the interior maximum. The same routine produces the
$\Gamma$-maximin overshoot instance (two states, $\NB_0=(1,-1)$, $\NB_1=(-1,1)$,
envelope $[0,1]$, $\EVPI_{\Pset}^{\mathrm{LE}}=2$) used in
Theorem~1(c) of the main text. Both are exact arithmetic and require no Monte Carlo.